\newtheorem{theorem}{Theorem}[section]
\newtheorem{proposition}[theorem]{Proposition}
\newtheorem{observation}[theorem]{observation}
\definecolor{color1}{RGB}{239, 118, 84} 
\definecolor{color2}{RGB}{255, 215, 0} 
\newcommand{\fasymbol}{\textcolor{color1}{$\blacklozenge$}}
\newcommand{\tx}{\textcolor{color2}{$\blacklozenge$}}
\newcommand{\factemoji}{\raisebox{-0.20\height}{\includegraphics[height=1.6\fontcharht\font`\B]{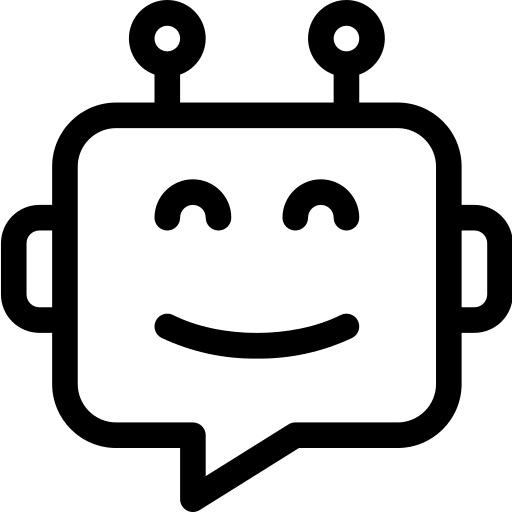}}}
\title{\factemoji{} I2E: From Image Pixels to Actionable Interactive Environments for Text-Guided Image Editing}
\def\thanks#1{\protected@xdef\@thanks{\@thanks
        \protect\footnotetext{#1}}}
\algnewcommand\Input{\State \textbf{Input:}}
\algnewcommand\Output{\State \textbf{Output:}}
\title{\factemoji{} I2E: From Image Pixels to Actionable Interactive Environments for Text-Guided Image Editing}
\author{
 \textbf{Jinghan Yu\textsuperscript{\fasymbol{}1,2}},
 \textbf{Junhao Xiao\thanks{\fasymbol{}: Equal contribution as co-first authors. Work completed during joint internship at HUST and Kuaishou.}\textsuperscript{\fasymbol{}1,2,3}},
 \textbf{Chenyu Zhu\textsuperscript{1,2}},
 \textbf{Jiaming Li\textsuperscript{1,2}},
 \textbf{Jia Li\textsuperscript{1}},
 \textbf{Hanming Deng\textsuperscript{1}},\\
 \textbf{Xirui Wang\textsuperscript{1}},
 \textbf{Guoli Jia\textsuperscript{4}},
 \textbf{Jianjun Li\textsuperscript{1}},
 \textbf{Xiang Bai\textsuperscript{1}},
 \textbf{Bowen Zhou\textsuperscript{4,5}},
 \textbf{Zhiyuan Ma\thanks{\tx{}: Corresponding author: Zhiyuan Ma.}\textsuperscript{\tx{}1}}
\\
 \textsuperscript{1}Huazhong University of Science and Technology,
 \textsuperscript{2}Kuaishou Technology,
 \\
 \textsuperscript{3}Central China Normal University,
 \textsuperscript{4}Tsinghua University,
 \textsuperscript{5}Shanghai AI Laboratory
\\
 {\small \texttt{jinghanyu0917@gmail.com, xiaojunhao066@gmail.com, mzyth@hust.edu.cn}}
}
\begin{document}
\maketitle

\begin{abstract}

Existing text-guided image editing methods primarily rely on end-to-end pixel-level inpainting paradigm. 
Despite its success in simple scenarios, this paradigm still significantly struggles with compositional editing tasks that require precise local control and complex multi-object spatial reasoning. This paradigm is severely limited by \textbf{\emph{1) the implicit coupling of planning and execution}}, \textbf{\emph{2) the lack of object-level control granularity}}, and \textbf{\emph{3) the reliance on unstructured, pixel-centric modeling}}. To address these limitations, we propose I2E, a novel ``\emph{Decompose-then-Action}'' paradigm that revisits image editing as an actionable interaction process within a structured environment. I2E utilizes a Decomposer to transform unstructured images into discrete, manipulable object layers and then introduces a physics-aware Vision-Language-Action Agent to parse complex instructions into a series of atomic actions via Chain-of-Thought reasoning. Further, we also construct \textsc{I2E-Bench}, a benchmark designed for multi-instance spatial reasoning and high-precision editing. Experimental results on \textsc{I2E-Bench} and multiple public benchmarks demonstrate that I2E significantly outperforms state-of-the-art methods in handling complex compositional instructions, maintaining physical plausibility, and ensuring multi-turn editing stability. Code and dataset: \href{https://image2env.github.io/}{\texttt{{project page}}}.
\end{abstract}

\begin{figure}[t]
  \centering
  \includegraphics[width=0.9\linewidth]{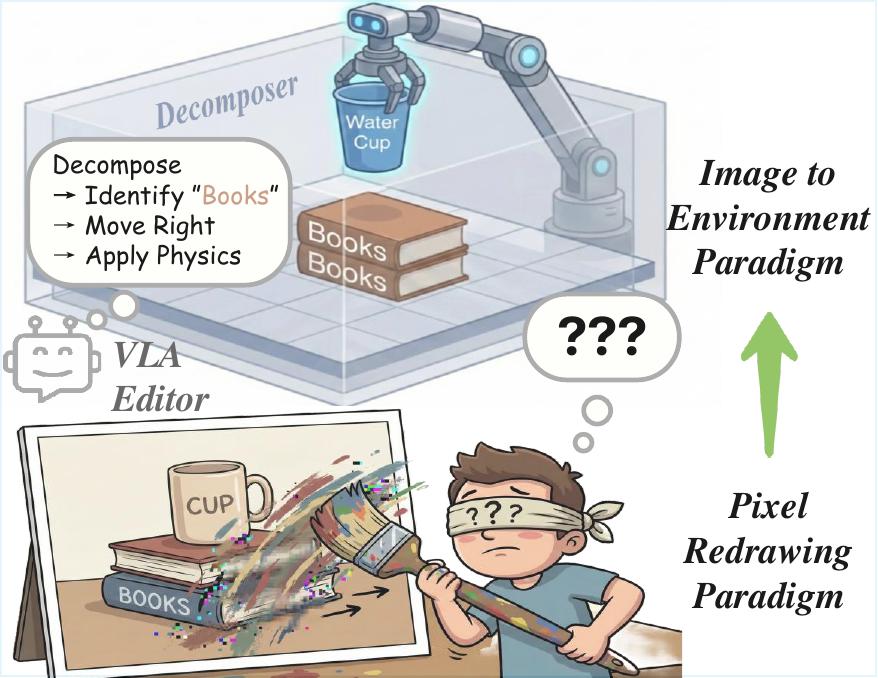}
  \caption{\textbf{Paradigm Comparison}. Unlike the \emph{Pixel Redrawing Paradigm} that directly manipulates pixels, I2E transforms images into a structured environment, enabling the VLA Editor to perform spatial and physical reasoning for precise, physically plausible edits.}
  \vspace{-15pt}
  \label{fig:abs}
\end{figure}

\section{Introduction}

When performing image editing, humans rarely reason in terms of direct pixel manipulations. Consider a typical editing request: \emph{``Move the books that are pressed by the water cup on the desktop to the right side of the cup''}. For humans, such an instruction implicitly involves a sequence of intermediate reasoning steps, including object identification, spatial relationship understanding, physical constraint awareness, and ordered execution.

\begin{figure*}[t]
  \centering
  \includegraphics[width=\textwidth]{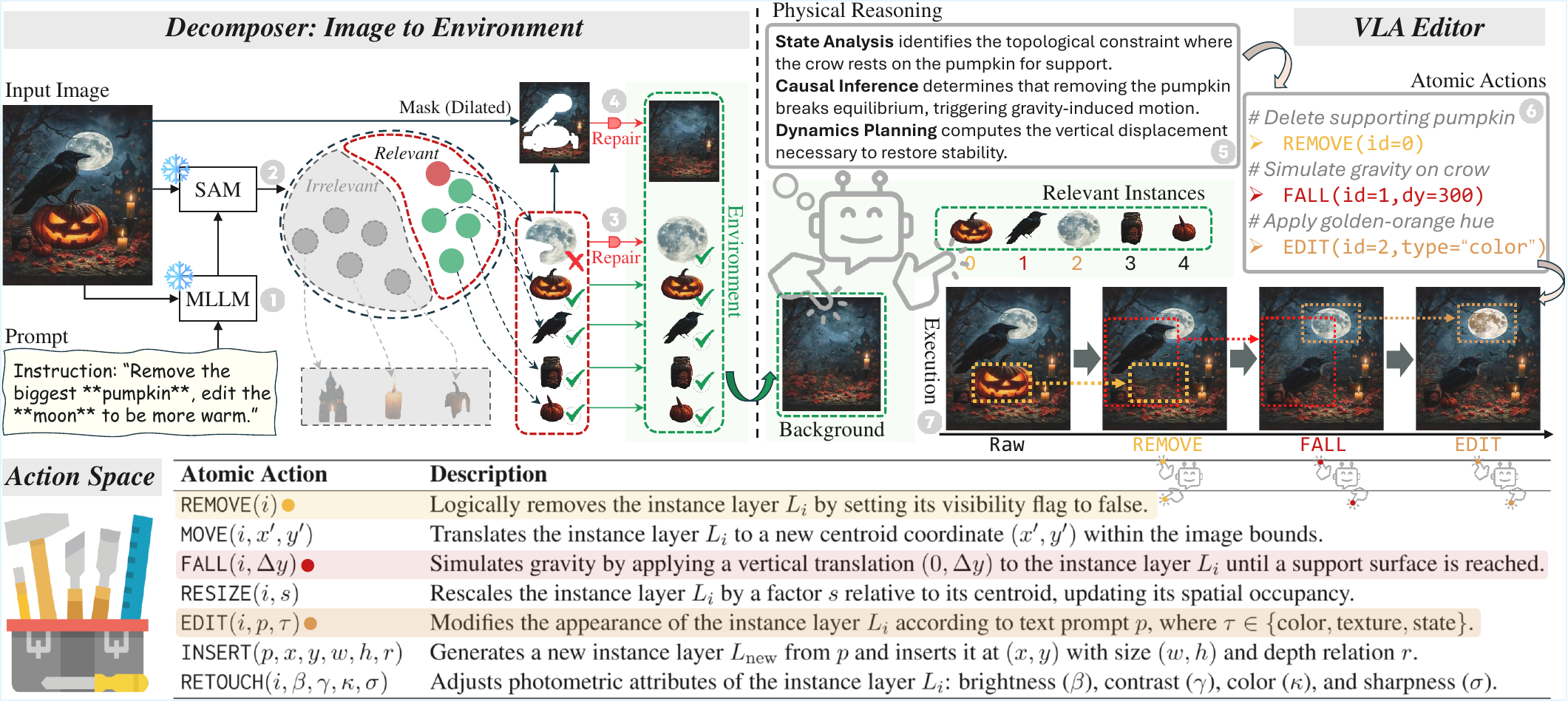}
  \caption{Overview of the I2E. The \textit{Decomposer} transforms unstructured images into a structured environment of actionable physical layers. The physics-aware \textit{VLA Editor} then uses chain-of-thought reasoning to translate instructions into executable atomic actions (see bottom) and executes them sequentially.}
  \label{fig:pipeline}
  \vspace{-10pt}
\end{figure*}

However, existing text-guided image editing models generally exploit an \emph{End-to-End} paradigm but lack such intermediate representations and \emph{Reasoning-then-Action} process. Specifically, they typically attempt to directly map instructions to final results through one or multiple rounds of pixel-level redrawing. While this end-to-end pixel redrawing paradigm is generally effective in simple editing scenarios, it exposes three major structural limitations when applied to compositional editing tasks that require precise local control and complex multi-object spatial reasoning:

(i) \textbf{\textit{Tight coupling between semantic reasoning and execution.}} Models are required to perform instruction understanding and pixel synthesis within a single generation process, making it difficult to form stable intermediate decision structures and leading to significantly degraded instruction-following performance in complex scenarios.

(ii) \textbf{\textit{Lack of object-level representations and boundaries.}} When editing is performed directly in pixel space, modifications cannot be strictly confined to target instances and often propagate as global perturbations to non-target regions.

(iii) \textbf{\textit{Pixel-centric and unstructured modeling.}} By treating images as unstructured two-dimensional pixel collections, models struggle to explicitly represent depth relations, support relations, and scale constraints, which frequently results in physically implausible editing outcomes, such as \emph{``floating objects''} (As described in Figure~\ref{fig:qualitative}).

These issues are further amplified in multi-round incremental interactive editing. Since each editing step typically redraws the entire image based on the previous output, unstructured pixel-level updates cause errors to accumulate across iterations, leading to severe \emph{``feature drift''} and making it difficult to achieve fine-grained, controllable continuous editing (see Figure~\ref{fig:delta}). Moreover, repeatedly invoking computationally expensive generation processes substantially degrades interaction efficiency.

To address these challenges, as illustrated in Figure~\ref{fig:abs}, we propose a new image editing paradigm, I2E (Image-to-Environment), which \textbf{reformulates image editing as an interactive process within an actionable structured environment}. From this perspective, an image is no longer treated as an indivisible pixel array, but as a composition of entities and background with explicit spatial relationships. Building on this, I2E mainly operates in two stages (Figure~\ref{fig:pipeline}): \emph{1) Image-to-Environment Transition.} A \textit{Decomposer} module transforms unstructured pixel representations into environment representations with explicit spatial structure. This module explicitly recovers the complete appearance of each instance object (\emph{e.g., the obscured moon in Figure~\ref{fig:pipeline}}) and their relative physical relationships, encapsulating them as independent and manipulable object-level physical layers, which together with the background layer form an interactive actionable environment. \emph{2) VLA-based Environment Editing. } On top of this structured environment, we introduce a physics-aware \textit{Vision--Language--Action (VLA) Editor} as the core decision-making component. Rather than directly predicting pixel-level changes, the agent progressively decomposes complex natural language instructions into a sequence of precise atomic actions that satisfy physical constraints through chain-of-thought reasoning.

This design brings multiple advantages. By decomposing high-level editing intents into executable atomic steps, it substantially improves instruction-following for complex instructions. Action execution grounded in object-level environment states further ensures that edits are strictly localized to target instances, effectively eliminating interference in irrelevant regions. Beyond these benefits, the decoupled atomic actions and the explicit grounding of target entities enable efficient multi-round incremental editing. This transforms the generative editing paradigm from a ``one-shot global repainting'' process into a ``progressive refinement workflow''. When responding to user feedback or during self-correction, the system does not require a reset of the scene, but instead updates the state by appending corrective actions.

Moreover, we observe that existing benchmarks primarily focus on style transfer or simple single-step instructions, lacking comprehensive evaluation of complex spatial reasoning, multi-instance interaction, and physical constraint consistency. To fill this gap and validate the effectiveness of the proposed approach, we introduce \textsc{I2E-Bench}, a benchmark designed for multi-instance spatial reasoning and high-precision image editing. Extensive experiments on \textsc{I2E-Bench}, as well as public benchmarks such as MagicBrush and EmuEdit, demonstrate that I2E significantly outperforms state-of-the-art methods in handling compositional instructions, maintaining physical consistency, and ensuring stability in multi-round interactions.

\section{Related Works}

\subsection{Text-Guided Image Editing and Agentic Approaches}

Early text-guided image editing methods \citep{brooks2023instructpix2pix, hertz2022prompt, meng2021sdedit} primarily rely on end-to-end pixel redrawing, directly mapping textual instructions to global image synthesis. While effective for simple edits, tightly coupling instruction understanding with pixel generation limits their ability to handle compositional commands that require precise local control and multi-object spatial reasoning.

Recent approaches improve semantic interpretation by incorporating multimodal large language models (MLLMs) \citep{fu2024mgie, liu2025step1x, yu2025anyedit} or unifying reasoning and generation within large transformer-based models \citep{xiao2025omnigen, betker2023improving,feng2025dit4edit}. Despite stronger instruction comprehension, edit execution remains bound to global resampling, making these methods prone to unintended changes in non-target regions and shows attribute leakage\citep{mun2025addressing}.

Agentic editing frameworks \citep{huang2024smartedit, hu2025image} further decompose instructions into sub-tasks, yet most still realize each step through independent image generation, leading to accumulated deviations across interactions. In contrast, I2E performs editing as object-level actions over a structured environment, enabling incremental updates without re-synthesizing the entire image.

\subsection{Structured Scene Representation and Amodal Decomposition}

Precise local editing requires structured scene representations beyond flat pixel grids. Instance segmentation models \citep{kirillov2023segment, ravi2024sam} provide object localization but operate at the modal level, failing to recover occluded content and often introducing missing regions when objects are manipulated \citep{yu2019free}.

Layered image generation methods \citep{zhang2024transparent} partially improve locality but typically focus on synthesis with fixed layouts rather than interactive editing. Amodal completion approaches \citep{ozguroglu2024pix2gestalt, liu2025towards} reconstruct occluded appearances, yet are usually designed as standalone restoration modules and lack a unified representation for downstream manipulation \citep{ao2025open}.

Our work integrates instance-level amodal decomposition with explicit depth-aware ordering to form complete, spatially organized object layers. This representation enables object-specific editing while keeping non-target regions unchanged.

\subsection{Physical Reasoning and Vision--Language--Action Models}

Maintaining physical plausibility remains challenging in generative image editing. Existing methods often rely on static geometric cues, such as depth maps or edge constraints \citep{zhang2023adding, lee2022instaorder}, which cannot capture changes in physical relationships induced by editing actions, leading to implausible outcomes such as unsupported objects \citep{pan2023drag}.

Vision--Language--Action (VLA) models \citep{zitkovich2023rt, driess2023palm, kim2024openvla} demonstrate effective physical reasoning by grounding language instructions in structured environment states and executing actions under explicit constraints. Inspired by this paradigm, we reformulate image editing as task-driven interaction within a structured scene representation \citep{ha2018world}. Rather than directly synthesizing pixels, our framework plans and executes object-level actions conditioned on explicit spatial and relational states, enabling physically consistent editing.

\section{Motivation: Analysis of End-to-End Editing Bottlenecks}

End-to-end pixel inpainting, while effective for simple edits, struggles with compositional tasks that require precise local control and multi-instance spatial reasoning. We analyze these structural bottlenecks to motivate a paradigm shift.

\subsection{Instruction Collapse}

Figure~\ref{fig:qualitative} illustrates that when instructions contain multiple sub-goals, existing models often execute only a subset, ignoring or inconsistently satisfying others (\emph{instruction collapse}). Analysis (Appendix~\ref{app:tho1}) attributes this to text encoding and conditioning limits: complex instructions are compressed into a single global embedding and injected via cross-attention, causing sub-goal conflicts and unstable execution.

\begin{figure}[t]
  \centering
  \includegraphics[width=\linewidth]{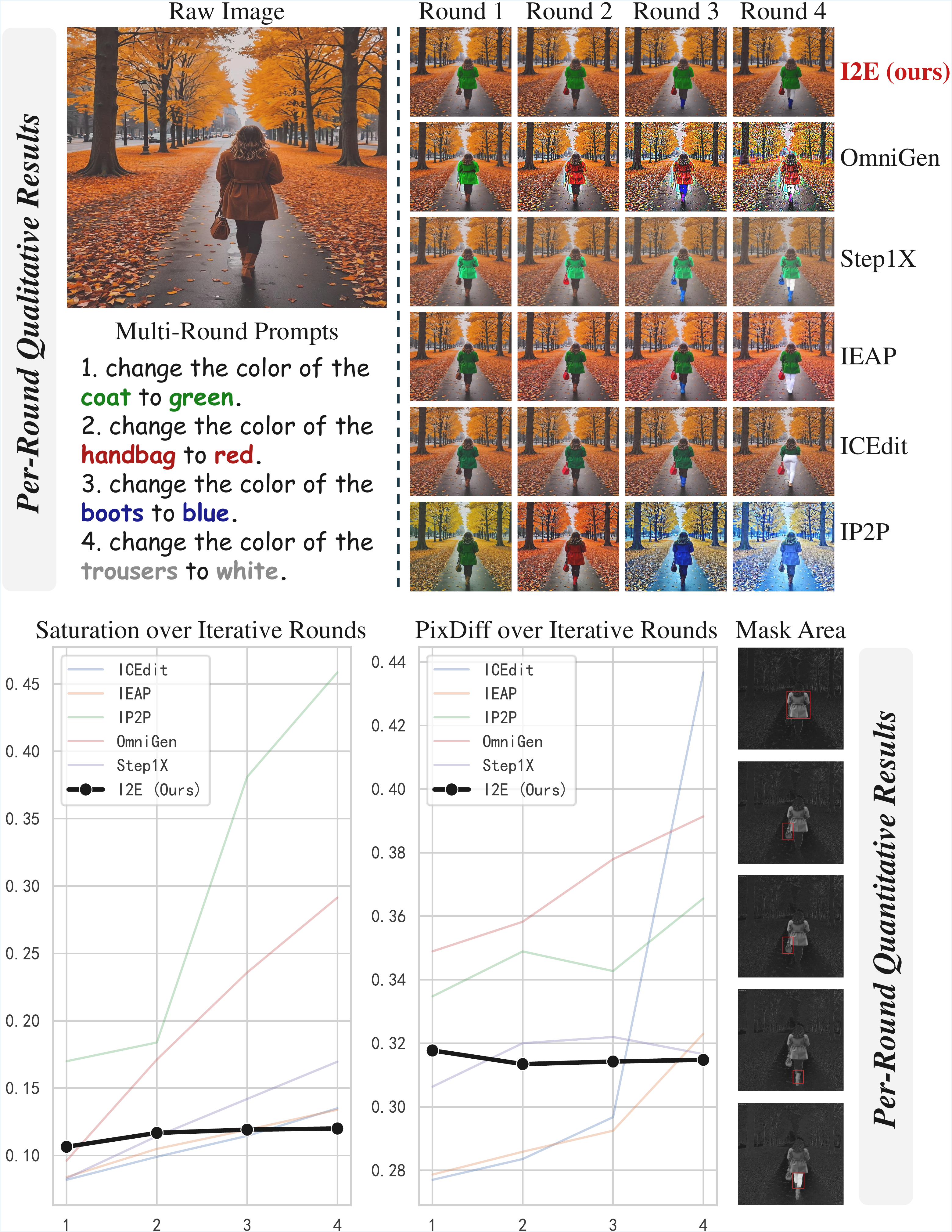}
  \caption{Multi-turn Stability.\textbf{Top:} Baselines exhibit severe \textit{error accumulation} (e.g., visual distortion) over 4 rounds, while I2E preserves integrity. \textbf{Bottom:} Metrics confirm I2E's constant consistency in Saturation and Pixel Difference (PixDiff) versus the monotonic degradation of end-to-end models.}
  \label{fig:delta}
  \vspace{-15pt}
\end{figure}

\subsection{Inevitability of Global Entanglement}

End-to-end models are statistical generators, not deterministic editors, making \emph{lossless local editing} theoretically infeasible. Two structural factors contribute: (i) \textit{VAE bottleneck}: lossy compression-reconstruction ($x \to z \to \hat{x}$) degrades high-frequency details in non-edited regions; (ii) \textit{Self-attention coupling}: local feature changes propagate globally via dense attention. Multi-round simulations (Figure~\ref{fig:delta}) confirm that non-target deviations accumulate with editing steps $n$, leading to severe visual artifacts.

\subsection{Paradigm Shift: From Pixel Resampling to Entity Manipulation}

These observations suggest that tightly coupling instruction parsing, planning, and pixel rendering limits edit reliability. 
We propose a paradigm shift: reformulate image editing by transforming unstructured pixel arrays into interactive structured environments, where a VLA agent executes edits through explicit entity manipulations instead of global pixel resampling.

\section{Methodology}

As illustrated in Figure~\ref{fig:pipeline}, we propose the I2E framework, which reformulates image editing as an interaction process within a actionable environment.
The framework consists of two cascaded stages: (i) \textit{Environment Construction}, where a Decomposer transforms the input image into an explicit object-level environment representation; and (ii) \textit{Agentic Interaction}, in which a VLA Editor performs physical reasoning and executes edits through atomic entity-level operations.

\subsection{Decomposer: Environment Construction}

The Decomposer converts an unstructured input image $I$ into an interactive structured environment $\mathcal{E}$ for object-level manipulation.
It disentangles and completes relevant instances as independent entities, and organizes them into a physically consistent stacking hierarchy.
The resulting object layers with explicit spatial relationships form a manipulable environment for subsequent agentic interaction.

\paragraph{Instance Disentanglement and Completion.}
To lift the input image into manipulable layers, we first employ a collaborative perception pipeline to identify and segment high-precision masks $m_i$ for relevant instances (i.e., potential editing targets), while merging irrelevant objects into the background. This stage integrates a Multimodal Large Language Model (MLLM) for semantic reasoning with advanced grounding and segmentation frameworks to ensure mask accuracy. 
Since the segmented regions are inherently incomplete due to occlusion, we utilize a generative fill-in mechanism to recover invisible structures. Guided by context-rich prompts from the MLLM, this process synthesizes missing textures and geometry, yielding a set of complete, transparent RGBA layers $\{\tilde{I}_i\}$. Concurrent with foreground processing, an occlusion-aware inpainting module is applied to remove the extracted instances from the original canvas, restoring a clean and cohesive background $B$. See details in Appendix~\ref{app:implementation}

\paragraph{Physical Layer Construction.}
Establishing a globally correct stacking order is a prerequisite for a physically consistent environment.
Since explicit occlusion constraints only exist in regions where instances overlap at the pixel level, we propose a DAG-based Spatial Constraint Propagation Algorithm (see Algorithm~\ref{alg:spatial_ordering}) to infer global layer relationships. Specifically, we construct a directed acyclic graph (DAG) where nodes correspond to instances and directed edges encode occlusion dependencies.
We jointly consider two types of constraints:
(i) \emph{hard constraints} derived from the pixel-level occlusion matrix predictions, and
(ii) \emph{soft constraints} obtained from monocular depth estimation, which refine the relative ordering without violating observed hard occlusions.
By computing the transitive closure of the graph and defining the node out-degree as the depth score $D_i$, we resolve the global topological structure.

The global stacking sequence is then formalized as a permutation $\pi = (\pi_1, \ldots, \pi_N)$ that satisfies the monotonicity constraint:
\begin{equation}
D_{\pi_k} \ge D_{\pi_{k+1}}, \quad \forall k \in [1, N-1],
\end{equation}
where $\pi_1$ denotes the front-most instance index.
Finally, each instance is encapsulated into an independent physical layer $L_i$, and combined with the background $B$ to constitute the structured physical environment $\mathcal{E}$:
\begin{equation}
L_i = \{\tilde{I}_i, m_i, D_i\}, \quad \mathcal{E} = (\{L_i\}_{i=1}^N, B).
\end{equation}

\begin{algorithm}[t]
\captionsetup{font={footnotesize}}
\caption{DAG-based Spatial Constraint Propagation}
\label{alg:spatial_ordering}

\scriptsize

\begin{algorithmic}[1]
\renewcommand{\algorithmicrequire}{\textbf{Input:}}
\renewcommand{\algorithmicensure}{\textbf{Output:}}

\Require Occlusion matrix $O \in \{0,1\}^{N \times N}$, Depth soft constraints $O^{\mathrm{soft}} \in \{0,1\}^{N \times N}$
\Ensure Depth scores $D \in \mathbb{Z}^N$
\State \textit{\# Phase 1: Occlusion (hard constraints)}
\ForAll{$i,j \in \{1, \dots, N\}$}
    \If{$O_{ij}=1$}
        \State $G_{ji}\leftarrow 1$
    \EndIf
\EndFor

\State \textit{\# Phase 2: Depth (soft constraints)}
\ForAll{$i,j \in \{1, \dots, N\}$}
    \If{$O^{\mathrm{soft}}_{ij}=1 \land G_{ji}=0$}
        \State $G_{ij}\leftarrow 1$
    \EndIf
\EndFor

\State \textit{\# Phase 3: Constraint propagation}
\Repeat
    \State $G \leftarrow G \lor (G \cdot G)$
\Until{$G$ converges}

\State \textit{\# Phase 4: Calculate Scores}
\For{$i=1$ to $N$}
    \State $D_i \leftarrow \sum_{j \neq i} \mathbb{I}(G_{ij}=1)$
\EndFor

\State \Return $D = \{D_1, \dots, D_N\}$
\end{algorithmic}
\end{algorithm}

\subsection{VLA Editor: Agentic Interaction}

Given the structured environment $\mathcal{E}$, the VLA Editor serves as the decision and execution core, translating natural language instructions into physics-consistent actions that drive environment evolution.

\paragraph{Physics-Aware CoT Reasoning.}
We employ an MLLM-based~\cite{qwen3} agent to perform chain-of-thought (CoT) reasoning under an explicit set of physical constraints $\mathcal{C}_{\mathrm{phy}}$ (e.g., gravity and support rules; see Appendix~\ref{app:physical}) and a predefined action space $\mathcal{A}$ (illustrated in Figure~\ref{fig:pipeline}).
Given the instruction $T$ and the current scene state, the agent produces structured reasoning outputs, which are compiled into a sequence of parameterized atomic actions $\tilde{\mathcal{A}} = \{a_1, \ldots, a_k\}$.

\paragraph{Action Execution.}
Each atomic action updates $\mathcal{E}$ through object-level operations rather than global pixel resampling.
\texttt{REMOVE} hides the target layer, exposing the pre-repaired background $B$.
\texttt{MOVE} and \texttt{FALL} perform rigid transformations on object layers while preserving geometric integrity.
\texttt{RESIZE} rescales object layers with fixed aspect ratios.
Appearance edits are handled via \texttt{EDIT} and \texttt{RETOUCH}, which modify color, texture, or photometric attributes at the layer level.
\texttt{INSERT} synthesizes a new object layer and inserts it into the global stacking order $\pi$ according to predicted relational constraints, ensuring physically consistent occlusion.

\paragraph{Multi-Round Incremental Refinement.}
Since the environment state is explicitly maintained, I2E naturally supports incremental editing through action accumulation.
User feedback or self-critique is handled by appending corrective actions without resetting the scene.
By iteratively executing, evaluating, and revising, the closed loop stabilizes refinement and inhibits the compounding errors that arise with repeated pixel-level re-generation.

\section{I2E-Bench}
Existing benchmarks mainly target style transfer or simple single-step edits, and therefore inadequately evaluate complex spatial reasoning, multi-instance interaction, and physical consistency.
To fill this gap, we introduce \textsc{I2E-Bench}, a benchmark for multi-instance spatial reasoning and high-precision image editing.
It comprises $200$ curated images from diverse open-source platforms, spanning real-world scenes, illustrations, and anime.
Each image is paired with $5$--$10$ editing instructions, emphasizing complex multi-action edits that require precise spatial manipulation while preserving stylistic and semantic coherence.
Details in Appendix~\ref{app:I2Ebench}.

\section{Comparative Experiments}

\textbf{Baselines.}
We compare our method with representative state-of-the-art instruction-guided image editing approaches, including IP2P\citep{brooks2023instructpix2pix}, OmniGen\citep{xiao2025omnigen}, Step1X\citep{liu2025step1x}, IEAP\citep{hu2025image}, and ICEdit\citep{zhang2025context}.
These methods cover the dominant end-to-end and agent-based paradigms in the literature.
For a fair comparison, we restrict all methods to a single refinement round.

\noindent\textbf{Datasets.}
In addition to our proposed \textsc{I2E-Bench}, we evaluate all methods on two widely adopted public benchmarks, MagicBrush\citep{zhang2023magicbrush} and EmuEdit\citep{sheynin2024emu}, to ensure comprehensive and objective evaluation. 

\noindent\textbf{Metrics.}
We evaluate performance along three dimensions:
(i) \textbf{Image fidelity.} We introduce LPIPS-U, a variant of LPIPS~\citep{zhang2018unreasonable}, to measure perceptual similarity over unedited regions, and employ DINO-ViT~\citep{caron2021emerging} to assess semantic consistency.
(ii) \textbf{Constraint adherence.} Spatial and operational constraints are evaluated using Spatial Accuracy (SA) and Constraint Satisfaction Rate (CSR). GroundingDINO~\citep{liu2023grounding} is used to localize referenced objects and automatically verify spatially constrained operations.
(iii) \textbf{Instruction completion.} For high-level reasoning evaluation, we adopt Qwen3VL~\citep{qwen3} to score Physical Consistency (PC) and Instruction Compliance (IC).
We further report the Multi-step Score (MS) to quantify overall success in multi-action editing scenarios, which is particularly important for complex benchmarks such as \textsc{I2E-Bench}.
Implementation details are provided in Appendix~\ref{app:eval}.

\begin{figure*}[t]
  \centering
  \includegraphics[width=0.96\textwidth]{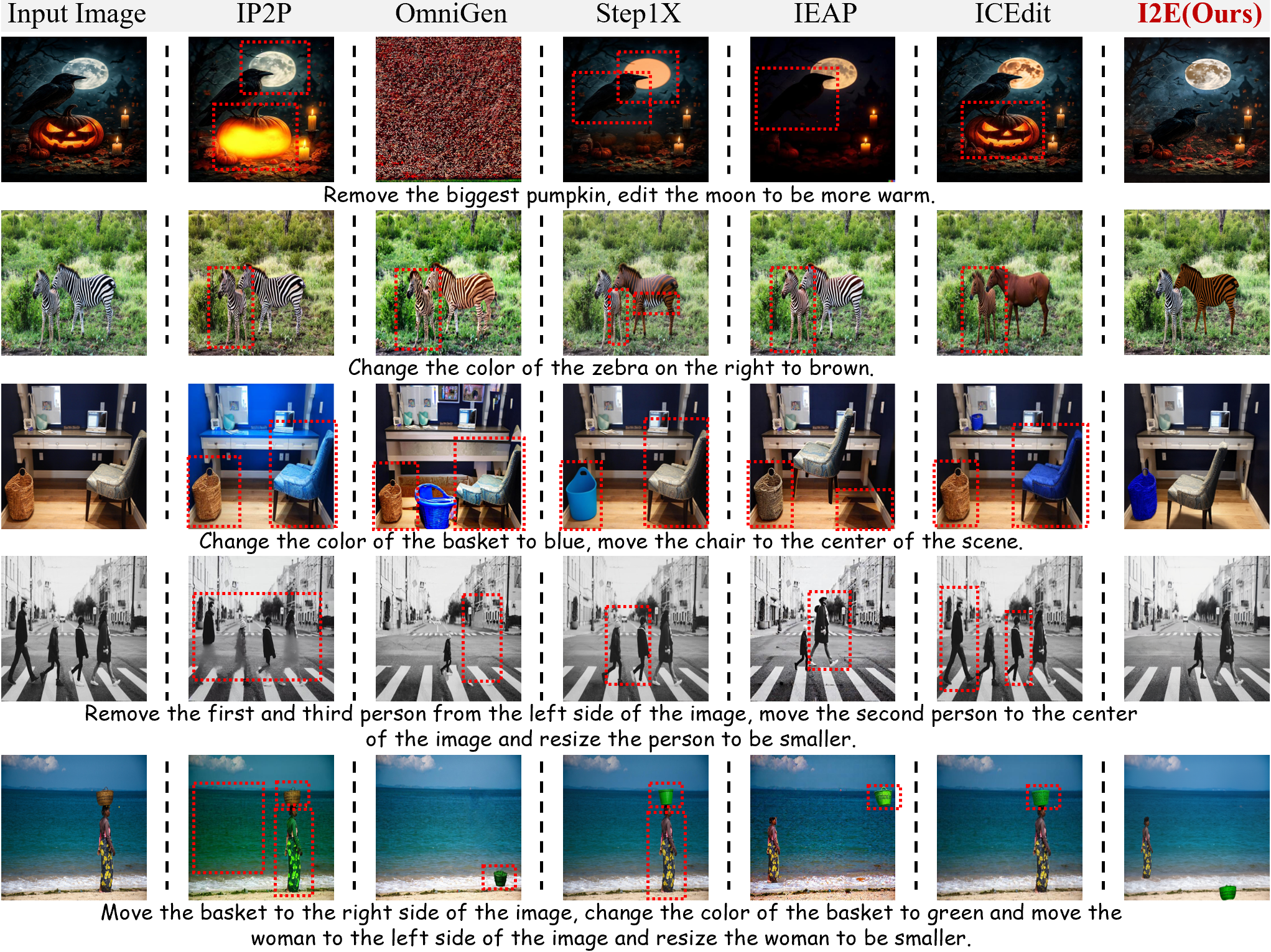}
  \caption{The results of the qualitative comparison on \textsc{I2E-Bench}.}
  \label{fig:qualitative}
  \vspace{-10pt}
\end{figure*}

\begin{figure*}[t]
  \centering
  \includegraphics[width=0.96\textwidth]{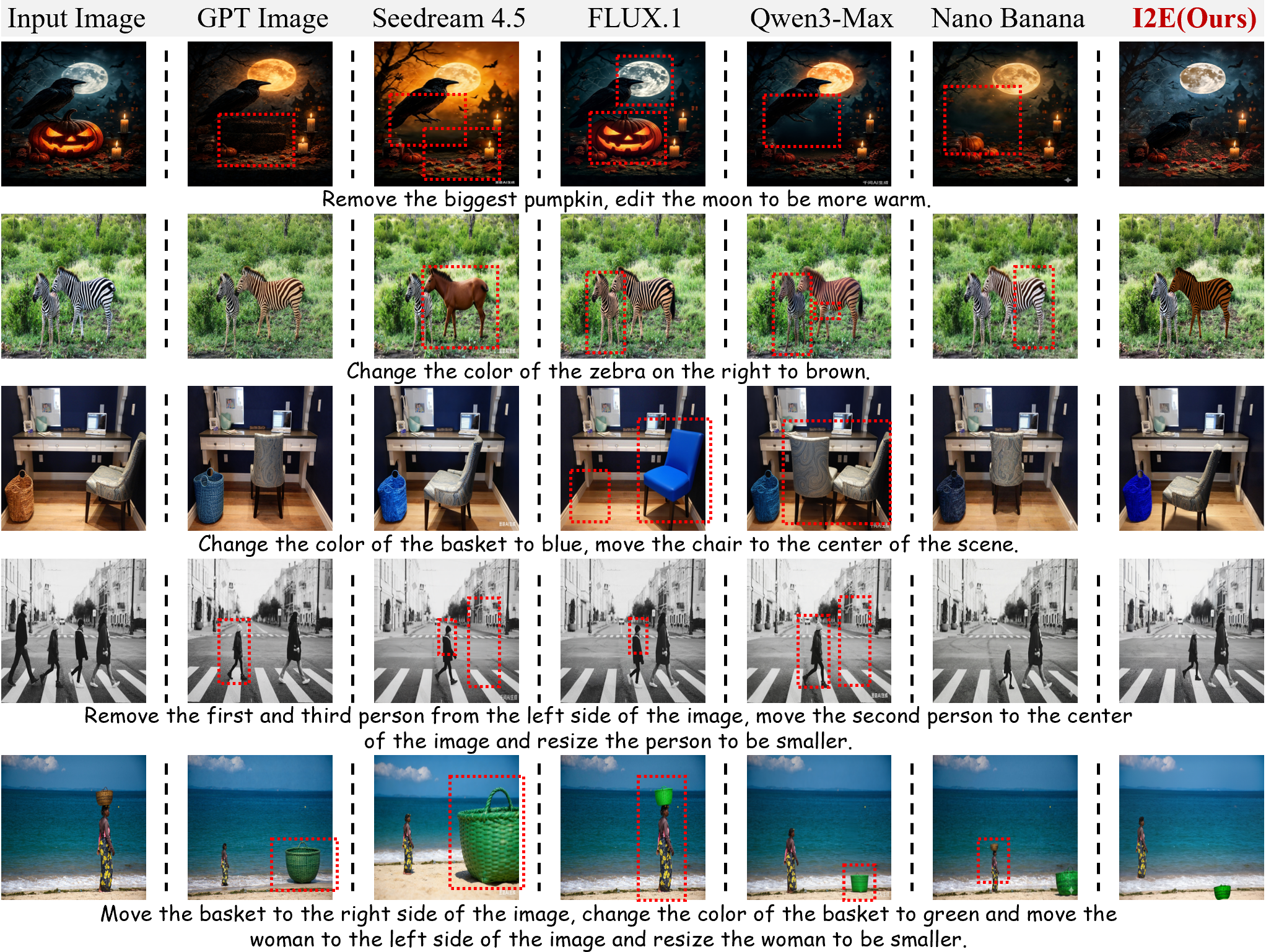}
  \caption{Qualitative results on I2E-Bench compared to commercial models.}
  \label{fig:close}
\end{figure*}

\subsection{Quantitative Results}

\begin{table}[t]
  \centering
  \scriptsize
  \setlength{\tabcolsep}{3.2pt}
  \begin{tabular}{l|ccccccc}
    \toprule
    \textbf{Method} &
    LPIPS-U$\downarrow$ &
    DINO$\uparrow$ &
    SA$\uparrow$ &
    CSR$\uparrow$ &
    PC$\uparrow$ &
    IC$\uparrow$ &
    MS$\uparrow$ \\
    \midrule
    IP2P      & 0.2137 & 0.9565 & 0.4654 & 0.6400 & 0.7270 & 0.3747 & 0.3765 \\
    OmniGen  & 0.2072 & 0.9507 & 0.5263 & 0.6900 & 0.5620 & 0.3813 & 0.4300 \\
    Step1X    & \textbf{0.0353} & \textbf{0.9892} & 0.4861 & 0.6300 & \underline{0.8400} & \underline{0.5898} & 0.5483 \\
    IEAP      & 0.1532 & 0.9651 & \underline{0.6213} & \underline{0.7700} & 0.6290 & 0.5109 & \underline{0.5583} \\
    ICEdit    & 0.0778 & 0.9816 & 0.4825 & 0.6200 & 0.7560 & 0.5125 & 0.5199 \\
    \midrule
    \textbf{I2E} &
    \underline{0.0754} &
    \underline{0.9821} &
    \textbf{0.6923} &
    \textbf{0.8700} &
    \textbf{0.9210} &
    \textbf{0.8645} &
    \textbf{0.8074} \\
    \bottomrule
  \end{tabular}
  \caption{Quantitative comparison on \textsc{I2E-Bench}.
  \textbf{Bold}: best; \underline{underline}: second best.}
  \label{tab:quantitative_results}
  \vspace{-10pt}
\end{table}

\begin{table}[t]
  \centering
  \scriptsize
  \setlength{\tabcolsep}{5pt}
  \begin{tabular}{l|cccccc}
    \toprule
    \textbf{Method} & LPIPS-U$\downarrow$ & DINO$\uparrow$ & SA$\uparrow$ & CSR$\uparrow$ & PC$\uparrow$ & IC$\uparrow$ \\
    \midrule
    \multicolumn{7}{c}{\cellcolor[HTML]{F2F2F2}\textit{MagicBrush}} \\
    \midrule
    IP2P      & 0.0865 & 0.8765 & 0.6840 & 0.9700 & 0.7680 & 0.4808 \\
    OmniGen  & \underline{0.0456} & \underline{0.9254} & 0.6820 & 0.9700 & 0.6600 & 0.6435 \\
    Step1X    & 0.0476 & 0.9226 & \underline{0.6920} & 0.9800 & 0.7490 & \textbf{0.8617} \\
    IEAP      & 0.1893 & 0.7085 & 0.7020 & \underline{0.9900} & 0.7640 & 0.5467 \\
    ICEdit    & 0.0554 & 0.9251 & 0.6820 & 0.9700 & 0.7240 & 0.7020 \\
    \midrule
    \textbf{I2E} & \textbf{0.0446} & \textbf{0.9581} & \textbf{0.7120} & \textbf{1.0000} & \textbf{0.8668} & \underline{0.7924} \\
    \midrule
    \multicolumn{7}{c}{\cellcolor[HTML]{F2F2F2}\textit{EmuEdit}} \\
    \midrule
    IP2P      & 0.1392 & 0.8734 & 0.6539 & 0.9300 & 0.8470 & 0.5676 \\
    OmniGen  & 0.0923 & 0.8625 & 0.6923 & \underline{0.9700} & 0.7340 & 0.6563 \\
    Step1X    & 0.0732 & 0.9029 & \underline{0.6969} & \underline{0.9700} & 0.8190 & \underline{0.7672} \\
    IEAP      & 0.1533 & 0.7592 & 0.6881 & 0.9600 & 0.8230 & 0.6645 \\
    ICEdit    & \textbf{0.0493} & \underline{0.9261} & 0.6660 & 0.9400 & \underline{0.8570} & 0.7238 \\
    \midrule
    \textbf{I2E} & \underline{0.0565} & \textbf{0.9404} & \textbf{0.7183} & \textbf{1.0000} & \textbf{0.8952} & \textbf{0.8107} \\
    \bottomrule
  \end{tabular}
  \caption{Quantitative comparison on MagicBrush and EmuEdit.
  \textbf{Bold}: best; \underline{underline}: second best.}
  \label{tab:combined_results}
  \vspace{-5pt}
\end{table}

Tables~\ref{tab:quantitative_results} and~\ref{tab:combined_results} report quantitative comparisons between our method and state-of-the-art baselines on \textsc{I2E-Bench}, MagicBrush, and EmuEdit.

\paragraph{Results on \textsc{I2E-Bench}}
As shown in Table~\ref{tab:quantitative_results}, our method consistently outperforms all baselines across most metrics.
In particular, I2E achieves a substantial improvement on the Multi-step Score (MS), exceeding the second-best method by nearly $0.25$.
This gain mainly stems from the instance-level disentanglement introduced by the Decomposer, which isolates editing effects and effectively mitigates error accumulation in multi-step interactions.
Moreover, equipped with the VLA Editor for explicit physics-aware reasoning and action decomposition, our method also achieves clear advantages in constraint-related metrics, including CSR, IC, and PC.
These results demonstrate the effectiveness of reformulating image editing as structured interaction within an explicit physical environment.

We note that LPIPS-U and DINO scores are slightly lower than those of Step1X, which is primarily attributable to the currently adopted background restoration module.
As an open framework, I2E can readily incorporate stronger restoration models to further improve perceptual fidelity.

\begin{table}[t]
  \centering
  \scriptsize
\renewcommand{\arraystretch}{0.8}
  \begin{tabular}{l|cc|cc}
    \toprule
    \textbf{Method} 
    & Avg. Score $\uparrow$ & Rank 
    & Mean Rank $\downarrow$ & Rank \\
    \midrule
    IP2P      
      & 2.24 & 6 & 4.76 & 6 \\
    OmniGen  
      & 3.39 & 5 & 4.08 & 5 \\
    Step1X    
      & 4.32 & 3 & \underline{3.37} & 2 \\
    IEAP      
      & \underline{4.40} & 2 & 3.39 & 3 \\
    ICEdit    
      & 3.83 & 4 & 3.60 & 4 \\
    \midrule
    \textbf{I2E (Ours)} 
      & \textbf{7.42} & \textbf{1} & \textbf{1.79} & \textbf{1} \\
    \bottomrule
  \end{tabular}
  \caption{Human evaluation results on \textsc{I2E-Bench}.
  Rank columns indicate the ordering among all compared methods.
  \textbf{Bold} and \underline{underline} denote the best and second-best results, respectively.}
  \label{tab:human_evaluation}
  \vspace{-10pt}
\end{table}

\paragraph{Results on MagicBrush and EmuEdit}
Table~\ref{tab:combined_results} summarizes the results on two widely used public benchmarks.
Our method shows consistent superiority across datasets.
On MagicBrush, I2E achieves the best performance on all reported metrics.
On EmuEdit, although LPIPS-U is marginally lower than ICEdit, I2E yields a significant improvement in instruction compliance (IC), with a relative gain of nearly $9$\%.
Notably, I2E attains perfect constraint satisfaction (CSR = $1.0000$) and the highest physical consistency (PC) on both datasets, highlighting the effectiveness of explicit spatial planning and physical constraint modeling for complex image editing.

\begin{table*}[t]
  \centering
  \scriptsize
  \setlength{\tabcolsep}{2.6pt}
  \begin{tabular}{l|cc|cc|cc|cc|cc|cc|cc}
    \toprule
    \textbf{Variant} &
    \multicolumn{14}{c}{\textbf{Metrics (Value / $\Delta$)}} \\
    \cline{2-15}
     &
    \multicolumn{2}{c|}{LPIPS-U$\downarrow$} &
    \multicolumn{2}{c|}{DINO$\uparrow$} &
    \multicolumn{2}{c|}{SA$\uparrow$} &
    \multicolumn{2}{c|}{CSR$\uparrow$} &
    \multicolumn{2}{c|}{PC$\uparrow$} &
    \multicolumn{2}{c|}{IC$\uparrow$} &
    \multicolumn{2}{c}{MS$\uparrow$} \\
    \midrule

    Full (Ours) &
    0.0754 & -- &
    0.9821 & -- &
    0.6923 & -- &
    0.8700 & -- &
    0.9210 & -- &
    0.8645 & -- &
    0.8074 & -- \\
    \midrule

    \multicolumn{15}{c}{\cellcolor[HTML]{F2F2F2}\textit{Decomposer}} \\

    w/o BGR &
    0.0347 & $-0.0407$ &
    0.9837 & $+0.0016$ &
    0.6562 & $-0.0361$ &
    0.8100 & $-0.0600$ &
    0.6540 & $-0.2670$ &
    0.3387 & $-0.5258$ &
    0.5432 & $-0.2642$ \\
    w/o FGR &
    0.0803 & $+0.0049$ &
    0.9797 & $-0.0024$ &
    0.6507 & $-0.0416$ &
    0.7800 & $-0.0900$ &
    0.7450 & $-0.1760$ &
    0.4943 & $-0.3702$ &
    0.6383 & $-0.1691$ \\
    w/o DAG &
    0.0828 & $+0.0074$ &
    0.9787 & $-0.0034$ &
    0.6253 & $-0.0670$ &
    0.7700 & $-0.1000$ &
    0.8490 & $-0.0720$ &
    0.4792 & $-0.3853$ &
    0.5557 & $-0.2517$ \\
    \midrule

    \multicolumn{15}{c}{\cellcolor[HTML]{F2F2F2}\textit{VLA Editor}} \\
    w/o PR &
    0.0822 & $+0.0068$ &
    0.9774 & $-0.0047$ &
    0.6483 & $-0.0440$ &
    0.8000 & $-0.0700$ &
    0.8360 & $-0.0850$ &
    0.4742 & $-0.3903$ &
    0.5142 & $-0.2932$ \\
    w/o AR &
    0.0896 & $+0.0142$ &
    0.9798 & $-0.0023$ &
    0.6177 & $-0.0746$ &
    0.8500 & $-0.0200$ &
    0.8100 & $-0.1110$ &
    0.3151 & $-0.5494$ &
    0.3449 & $-0.4625$ \\
    \bottomrule
  \end{tabular}
  \caption{Quantitative ablation on \textsc{I2E-Bench}.}
  \label{tab:ablation_study}
\end{table*}

\begin{figure*}[t]
  \centering
  \includegraphics[width=0.98\textwidth]{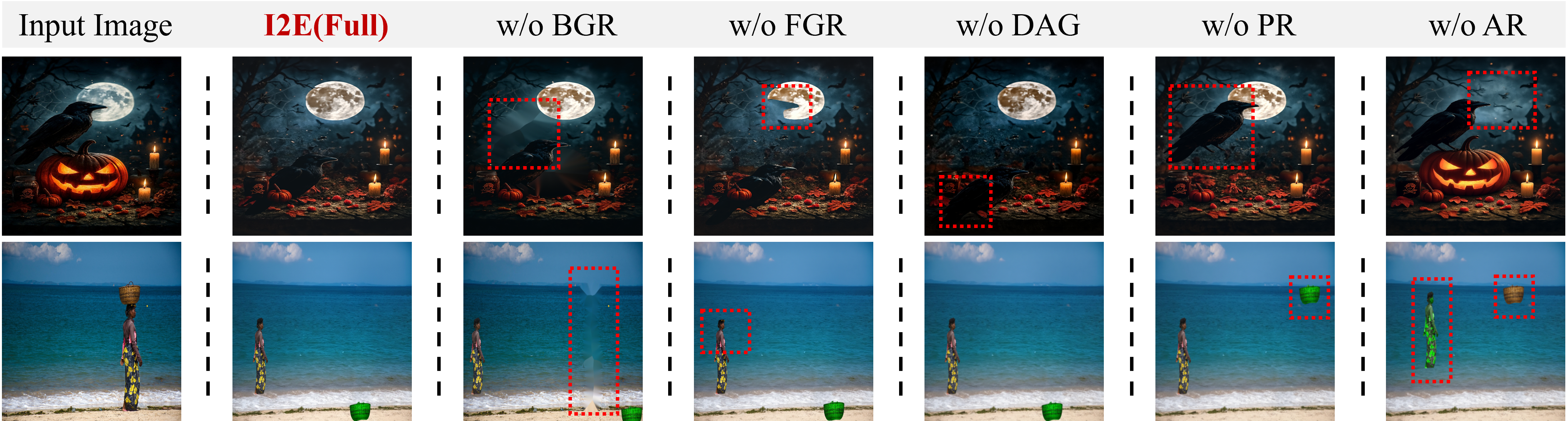}
  \caption{Qualitative ablation on \textsc{I2E-Bench}.}
  \label{fig:ablation}
\end{figure*}

\subsection{Qualitative Results}

Figure~\ref{fig:qualitative} visualizes comparisons on \textsc{I2E-Bench}, highlighting our framework's superiority across three critical dimensions:

\paragraph{Compositionality and Physical Logic.} 
End-to-end models often suffer from semantic coupling in multi-objective tasks. For instance, given the instruction ``\textit{remove the pumpkin and adjust the moon}'' (Row 1), baselines typically miss sub-goals due to attention conflict. I2E, however, accurately disentangles and executes all constraints. Crucially, I2E enforces physical plausibility. Although baselines like Step1X remove the supporting object (pumpkin), it usually surves a clear physical hallucination like leaving the dependent object (crow) floating. Relatively, I2E detects the support loss and triggers a gravity simulation, naturally landing the crow to ensure logical consistency.

\paragraph{Spatial Precision and Attribute Isolation.} 
I2E excels in maintaining geometric integrity during large-scale manipulations. In tasks requiring significant displacement (e.g., ``\textit{move the woman/chair},'' Rows $3$\&$5$), baselines frequently distort object structures or misplace targets. In contrast, our $2.5$D layered representation enables precise translation and scaling. Furthermore, for local attribute editing (e.g., ``\textit{recolor the right zebra},'' Row $2$), our strict layer isolation prevents the attribute leakage (color bleeding to the adjacent zebra) commonly observed in global processing models.

\subsection{Human Evaluation}

We conduct a blind human evaluation to assess human preference for complex image editing results.
Thirty participants each score 10 randomly sampled cases, where six anonymized outputs from I2E and five baselines are shown in random order, using a $0$--$10$ holistic rating. As shown in Table~\ref{tab:human_evaluation}, I2E consistently achieves the highest preference.

\section{Comparison with Commercial Models}
We further present qualitative comparisons between I2E and several recent commercial models, including GPT5.1~\citep{wang2025gptimageedit15mmillionscalegptgeneratedimage}, Seedream4.5~\citep{seedream2025seedream40nextgenerationmultimodal}, FLUX.1-Kontext-dev~\citep{labs2025flux1kontextflowmatching}, Qwen3-Max~\citep{qwen3max}, and Nano Banana~\citep{banana} in Figure~\ref{fig:close}. The results show that I2E outperforms these closed-source models in compositional instruction-based image editing, particularly for tasks requiring precise instance-level control and spatially and physically consistent manipulation.

\section{Ablation Study}
We conduct an ablation study on I2E-Bench to quantify the contribution of each component, as reported in Table~\ref{tab:ablation_study} and Figure~\ref{fig:ablation}. Specifically, we evaluate the effects of foreground reconstruction (FGR), background reconstruction (BGR), DAG-based spatial constraint propagation (DAG), physical reasoning (PR), and action reasoning (AR).

\subsection{Quantitative Analysis.}
Table~\ref{tab:ablation_study} confirms all modules are essential.

\paragraph{Decomposer:} 
Removing \textit{Background Reconstruction} yields artificially better LPIPS-U because the model edits fewer pixels, but causes a sharp drop in IC (to $0.3387$), indicating a failure to handle background-dependent instructions. Removing \textit{Foreground Reconstruction} compromises the geometric integrity of occluded objects, significantly degrading PC and MS. Absence of \textit{DAG-based Spatial Constraint Propagation} lowers CSR ($0.87$ $\to$ $0.77$), confirming the necessity of DAG-based sorting for modeling complex occlusions.
\paragraph{VLA Editor:} 
Without \textit{Action Reasoning}, IC and MS drop to $0.3151$ and $0.3449$, highlighting the critical role of decomposing high-level instructions into atomic actions for long-horizon stability. Disabling \textit{Physical Reasoning} reduces PC ($0.92$ $\to$ $0.84$), verifying that explicit constraints (e.g., gravity and support) are essential to prevent physical anomalies such as floating objects.

\subsection{Qualitative Analysis}
Figure~\ref{fig:ablation} qualitatively supports the above results.
\paragraph{Decomposer:} 
Removing \textit{background reconstruction} causes visible discontinuities, while disabling \textit{foreground reconstruction} leads to incomplete instances after movement or occlusion changes. Without \textit{DAG-based propagation occlusion}, the ordering in crowded scenes will become inconsistent.
\paragraph{VLA Editor:}
Ablating \textit{action reasoning}, edits become incomplete or incorrectly ordered. Disabling \textit{physical reasoning} results in floating objects or invalid support relations.

\section{Conclusion}

We propose I2E, a structured framework for instruction-based image editing via explicit reasoning and execution. By decomposing instructions into spatially and physically grounded actions, I2E enables more controllable and consistent editing. Future work aims to extend I2E to professional domains like interior design, envisioning the VLA Editor evolving from a passive “Instruction Executor” into an active “Intelligent Designer.”

\section*{Limitations}

Despite the encouraging results, the current framework still faces several limitations that warrant further investigation. First, the quality of scene decomposition is highly dependent on the performance of the underlying foundation models. Although SAM~$2$ and Flux provide state-of-the-art results, they may still struggle with extremely complex occlusion patterns or transparent objects such as glass and water, leading to imprecise segmentation or inpainting artifacts that can propagate to subsequent editing stages. Second, at the rendering stage, the system does not yet fully rely on physical simulation to handle complex material properties, such as soft-body deformation or fluid dynamics, nor fine-grained lighting interactions, such as caustics and interreflections. Consequently, edits involving significant lighting changes or object deformation may lack the highest level of photorealism.

\section*{Acknowledgments}
This paper is supported by the National Natural Science Foundation of China (No. 62406161) and sponsored by CCF-Kuaishou Large Model Explorer Fund (NO. CCF-KuaiShou 2025003).

\bibliography{I2E}

\newpage
\newpage
\appendix

\section{Appendix}
\label{sec:appendix}

\begin{figure*}[t]
  \centering
  \includegraphics[width=\textwidth]{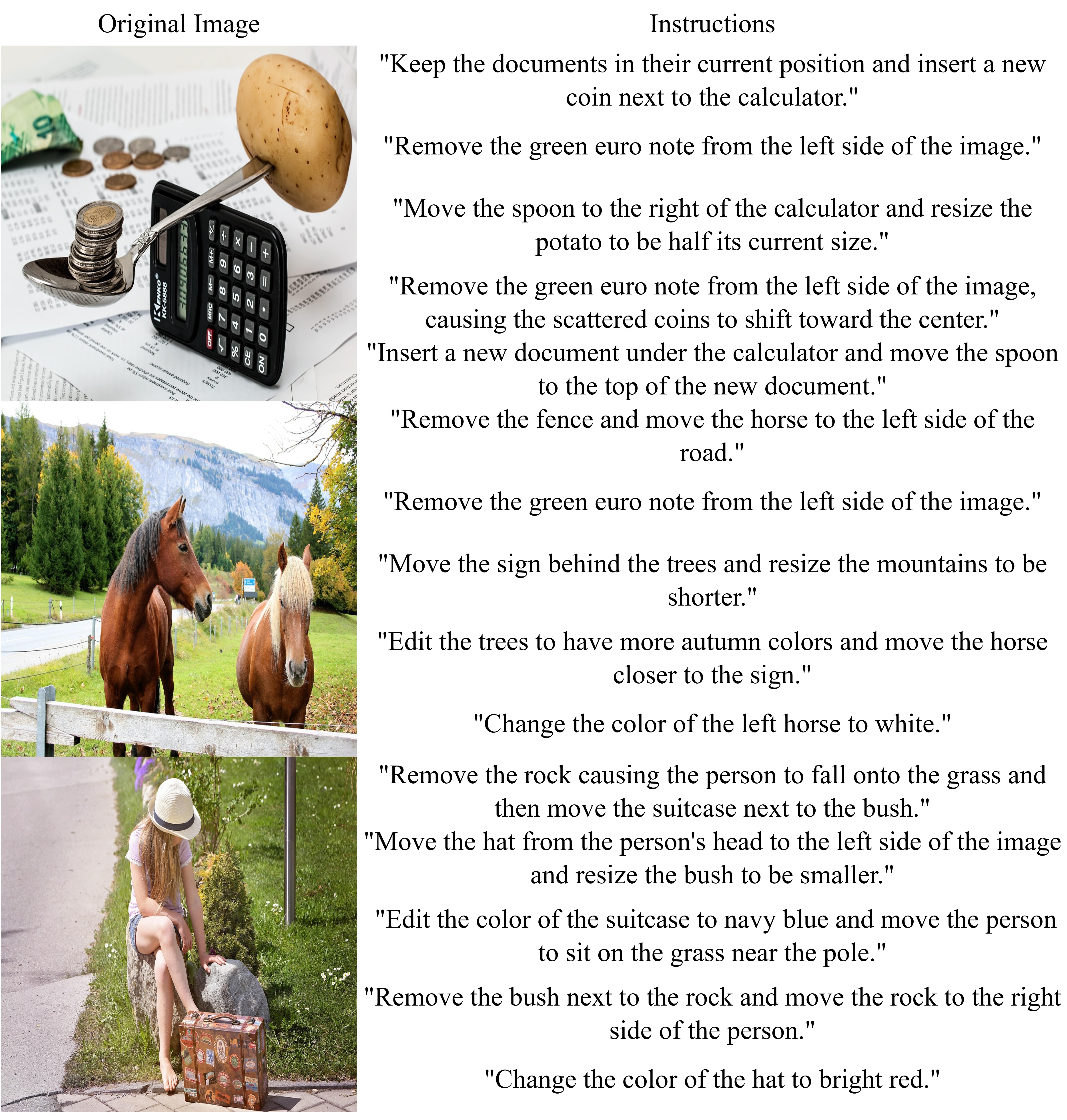}
  \caption{Examples of I2E-bench.}
  \label{fig:dataset}
\end{figure*}
\subsection{Details of I2E-Bench}
\label{app:I2Ebench}
\paragraph{Details of Data Collection}
The 200 base images for I2E-Bench were manually selected from Pixabay\footnote{\url{https://pixabay.com/}}. We intentionally chose this source due to its high-resolution content and the diversity of artistic styles it offers. For each image, we curated a diverse set of $5$--$10$ instructions, ranging from straightforward single-action prompts to intricate multi-step tasks. This design is specifically used to provide a rigorous assessment of the model's capabilities in spatial reasoning and sequential execution. Representative examples are provided in Figure~\ref{fig:dataset}.

\subsection{Theoretical Analysis: Structural Limitations of End-to-End Editing}
\label{app:tho1}
\begin{theorem}[Instruction Collapse under Global Conditioning]
\label{thm:instruction_collapse}
Let an end-to-end image editor condition generation on a single text embedding
$c \in \mathbb{R}^d$ produced from a composite instruction
$T = \{s_1, \dots, s_K\}$, where each $s_k$ denotes a semantically independent sub-instruction.
For sufficiently large $K$, the model cannot guarantee the simultaneous and stable execution
of all sub-instructions in a single forward generation.
\end{theorem}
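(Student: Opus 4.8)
The plan is to recast the statement as an information-theoretic bottleneck and close it with the data-processing inequality together with Fano's inequality. The intuition is that the single embedding $c \in \mathbb{R}^d$ has bounded information-carrying capacity, whereas $K$ semantically independent sub-instructions demand a \emph{linearly growing} amount of information to be jointly specified; once the demand exceeds the capacity, some sub-instruction must be corrupted in transit, and this corruption is exactly what surfaces as failed or unstable execution. The theorem then reduces to showing that a fixed-capacity channel cannot losslessly convey an unboundedly growing message.

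First I would formalize the setup. Treating the sub-instructions $s_1,\dots,s_K$ as statistically independent random variables is precisely what ``semantically independent'' buys us, so joint entropy is additive, $H(T) = \sum_{k=1}^{K} H(s_k) \ge K\,h_{\min}$, with $h_{\min} := \min_k H(s_k) > 0$. Next I would use the structural hypothesis that the generator conditions \emph{only} on $c$: conditioned on the source image, the edited output $\hat{I}$ is produced by a (possibly stochastic) map that factors through $c$, giving the Markov chain $T \to c \to \hat{I}$. The data-processing inequality then yields $I(T;\hat{I}) \le I(T;c) \le C$, where $C$ denotes the effective capacity of the embedding. To pin down $C$ I would impose a realistic operational constraint, since $\mathbb{R}^d$ is nominally infinite: finite numerical precision of $b$ bits per coordinate gives $C \le d\,b$, or equivalently a bounded-SNR Gaussian model gives $C = \tfrac{d}{2}\log(1+P/N)$. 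Either way $C$ is a finite constant independent of $K$.

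The final step converts the capacity gap into an execution-failure bound. I would define success as recoverability: simultaneous, stable execution means each $s_k$ can be read off from $\hat{I}$ by some decoder, a faithfulness/identifiability assumption that a correctly executed edit leaves a distinguishable trace in the output. Applying Fano's inequality with the full tuple $T$ as the message gives $P_{\mathrm{err}} \ge 1 - \tfrac{C+1}{H(T)} \ge 1 - \tfrac{C+1}{K\,h_{\min}}$, so the probability that at least one sub-instruction is mis-executed not only stays bounded away from zero once $K > (C+1)/h_{\min}$ but in fact tends to $1$ as $K$ grows. Because the bound holds over the stochasticity of the generative map, it simultaneously rules out a \emph{stable} guarantee. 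This identifies the collapse threshold $K^{\star} = \lceil (C+1)/h_{\min}\rceil$ that makes ``sufficiently large $K$'' precise and completes the argument.

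The main obstacle will be justifying the capacity bound on $c$ without either trivializing or over-claiming the result: a purely continuous embedding has unbounded capacity, so the whole argument stands or falls on the operational constraint one is willing to assume, whether finite precision, generation noise, or a Lipschitz/smoothness budget limiting how finely $c$ can be discriminated downstream. A secondary delicate point is the faithfulness assumption linking image-space execution to decodability of each $s_k$; it must be strong enough that failed recovery implies failed execution, yet weak enough to stay realistic. Handling these two assumptions cleanly, ideally by stating them as explicit hypotheses of the theorem rather than burying them in the proof, is where the real care is needed.
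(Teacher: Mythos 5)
Your proposal is correct in spirit and reaches the paper's conclusion, but by a genuinely different route. The paper's own argument is a dimension-counting sketch: since $E_{\text{text}}$ has fixed output dimension $d$ independent of $K$ while each $s_k$ contributes at least one independent semantic degree of freedom, the encoding is necessarily non-injective once $K > d$ (unavoidable information loss), and even for $K \le d$ the sub-instruction representations occupy overlapping, non-disentangled directions of $\mathbb{R}^d$ that are broadcast to every token via cross-attention, so sub-goals compete for a single conditioning channel. You instead push the bottleneck through the data-processing inequality on the Markov chain $T \to c \to \hat{I}$ and close with Fano's inequality, which buys you things the paper does not have: a quantitative collapse threshold $K^{\star} = \lceil (C+1)/h_{\min}\rceil$, an error probability tending to $1$, and a uniform treatment of the $K \le d$ and $K > d$ regimes. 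The price is the extra hypothesis you yourself flag: a raw $c \in \mathbb{R}^d$ has unbounded capacity, so $C < \infty$ must be imposed via finite precision, generation noise, or a smoothness budget, together with the faithfulness assumption that correct execution implies decodability of each $s_k$ from $\hat{I}$. Note that the same objection applies to the paper's own sketch --- a $d$-dimensional real vector can in principle encode arbitrarily many discrete sub-instructions, so the non-injectivity step there is only heuristically valid --- which makes your version, with its assumptions stated explicitly, arguably the more honest formalization. The two accounts also locate the failure differently: yours as message corruption in a capacity-limited channel, the paper's as attention-level competition among entangled embedding directions; these are complementary rather than equivalent explanations of the same empirical phenomenon.
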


\begin{proof}[Proof Sketch]
The editor encodes the full instruction as
\begin{equation}
c = E_{\text{text}}(T),
\end{equation}
where $E_{\text{text}}$ has fixed output dimension $d$ independent of $K$.
Each sub-instruction introduces at least one independent semantic factor, so the intrinsic
degrees of freedom required to represent $T$ grow with $K$.
When $K > d$, the mapping $E_{\text{text}}$ is necessarily non-injective, i.e.,
\begin{equation}
\exists\; T \neq T' \quad \text{s.t.} \quad E_{\text{text}}(T) = E_{\text{text}}(T'),
\end{equation}
implying unavoidable information loss.

Even when $K \le d$, the single embedding $c$ jointly represents all sub-instructions.
Since no constraint enforces disentanglement, the induced sub-instruction representations
occupy overlapping directions in $\mathbb{R}^d$:
\begin{equation}
\langle E_{\text{text}}(s_i), E_{\text{text}}(s_j) \rangle \neq 0,
\quad i \neq j.
\end{equation}
During generation, $c$ is broadcast to all spatial or latent tokens via attention.
Without an explicit routing mechanism, sub-instructions compete for the same conditioning
channel, leading to unstable or selective execution.
This phenomenon is referred to as \emph{instruction collapse}.
\end{proof}

\medskip

\begin{proposition}[Elimination of Same-Step Gradient Conflict by Sequential Decomposition]
\label{prop:gradient_conflict}
Consider the joint optimization objective
\begin{equation}
\mathcal L_{\text{joint}}(\theta)
= -\log p_\theta(x_0 \mid c_1, \dots, c_K),
\end{equation}
where $\{c_k\}$ denote sub-instruction conditions.
Let $\mathbf g_k$ be the gradient contribution induced by $c_k$.
Then the gradient of $\mathcal L_{\text{joint}}$ contains cross terms
$\langle \mathbf g_i, \mathbf g_j\rangle$ for $i \neq j$, which may be negative.

By contrast, if the objective is decomposed into a sequence of conditional sub-objectives
\begin{equation}
\mathcal L_{\text{seq}}(\theta)
= \sum_{k=1}^K
\mathcal L_k(\theta),
\end{equation}
\begin{equation}
\mathcal L_k
= -\log p_\theta(x_0^{(k)} \mid x_0^{(k-1)}, c_k),
\end{equation}
then no cross-gradient terms arise within the same optimization step.
\end{proposition}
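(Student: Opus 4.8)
The plan is to make precise what ``gradient contribution $\mathbf{g}_k$'' means and then compute the gradient of each objective directly, reading off the cross terms by inspection. First I would fix the joint loss $\mathcal{L}_{\text{joint}}(\theta) = -\log p_\theta(x_0 \mid c_1,\dots,c_K)$ and define $\mathbf{g}_k := \nabla_\theta \big[-\log p_\theta(x_0 \mid c_k)\big]$ as the per-condition gradient. For a standard conditioning mechanism where the log-likelihood aggregates the sub-instruction contributions (e.g.\ an additive score decomposition $\log p_\theta(x_0 \mid c_1,\dots,c_K) = \sum_k \log p_\theta(x_0 \mid c_k) + \text{const}$, as arises in classifier-free-guidance-style composition), the joint gradient is $\nabla_\theta \mathcal{L}_{\text{joint}} = \sum_k \mathbf{g}_k$. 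The squared norm of this aggregate gradient then expands as
\begin{equation}
\big\| \nabla_\theta \mathcal{L}_{\text{joint}} \big\|^2
= \sum_{k=1}^K \|\mathbf{g}_k\|^2 + \sum_{i \neq j} \langle \mathbf{g}_i, \mathbf{g}_j \rangle,
\end{equation}
which exposes the cross terms $\langle \mathbf{g}_i, \mathbf{g}_j\rangle$ explicitly; these are in general nonzero and, crucially, not sign-constrained, so when $\langle \mathbf{g}_i, \mathbf{g}_j\rangle < 0$ the update for $c_i$ partially cancels that for $c_j$ within a single step. This establishes the first half of the claim.

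For the sequential objective, the key observation is that the total loss is a sum of terms $\mathcal{L}_k = -\log p_\theta(x_0^{(k)} \mid x_0^{(k-1)}, c_k)$, each of which is optimized in a distinct optimization step against a distinct target state $x_0^{(k)}$ conditioned on the previously committed output $x_0^{(k-1)}$. I would argue that within the $k$-th optimization step the only gradient signal present is $\nabla_\theta \mathcal{L}_k$; there is no summation over $j \neq k$ inside that step because the conditions $c_j$ for $j \neq k$ do not appear in $\mathcal{L}_k$. Hence the intra-step gradient is $\nabla_\theta \mathcal{L}_k$ with no cross terms $\langle \mathbf{g}_i, \mathbf{g}_j\rangle$, $i \neq j$, ever co-occurring in a single backward pass. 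The cancellation mechanism from the joint case therefore cannot arise, which is exactly what the proposition asserts (note the precise wording is ``within the same optimization step'').

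The main obstacle is justifying the additive decomposition of $\log p_\theta(x_0 \mid c_1,\dots,c_K)$ into a sum of per-condition log-likelihoods, since for a general joint conditional density this need not hold and the cross terms would instead hide inside the single $\nabla_\theta[-\log p_\theta(x_0 \mid c_1,\dots,c_K)]$ rather than appearing as a clean $\sum_{i\neq j}\langle \mathbf{g}_i,\mathbf{g}_j\rangle$. I would address this by restricting to the composed-conditioning regime that the proposition implicitly presumes, where each $c_k$ contributes an independent additive score (the same assumption that underlies compositional and classifier-free guidance), and stating this assumption explicitly; under it the decomposition is exact and the argument is a direct computation. The second half of the statement is essentially definitional once the ``same optimization step'' scoping is made precise, so no genuine difficulty remains there; the entire content of the proposition is the contrast between the coupled gradient $\sum_k \mathbf{g}_k$ and the decoupled per-step gradients $\nabla_\theta \mathcal{L}_k$, and the proof reduces to exhibiting this contrast.
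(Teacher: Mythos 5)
Your proposal follows essentially the same route as the paper's proof sketch: write the joint gradient as $\sum_k \mathbf g_k$, expand the squared norm to expose the cross terms $\langle \mathbf g_i, \mathbf g_j\rangle$, and observe that each sequential sub-objective $\mathcal L_k$ depends only on $c_k$, so no cross terms co-occur in a single update. The one place you go beyond the paper is in explicitly naming the additive log-likelihood decomposition needed to justify $\nabla_\theta \mathcal L_{\text{joint}} = \sum_k \mathbf g_k$ --- the paper simply defines $\mathbf g_k$ as ``the gradient contribution induced by $c_k$'' and asserts the sum --- which is a worthwhile clarification but not a different argument.
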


\begin{proof}[Proof Sketch]
For the joint objective,
\begin{equation}
\nabla_\theta \mathcal L_{\text{joint}}
= \sum_{k=1}^K \mathbf g_k,
\end{equation}
and the squared gradient norm expands as
\begin{equation}
\|\nabla_\theta \mathcal L_{\text{joint}}\|^2
= \sum_k \|\mathbf g_k\|^2
+ \sum_{i\neq j} \langle \mathbf g_i, \mathbf g_j\rangle,
\end{equation}
where negative inner products correspond to gradient conflict.

Under the sequential formulation, each sub-objective $\mathcal L_k$ depends on a single
condition $c_k$.
At optimization step $k$, the gradient is
\begin{equation}
\nabla_\theta \mathcal L_k = \mathbf g_k,
\end{equation}
and no terms involving $\mathbf g_j$ for $j \neq k$ appear in the same update.
Thus, while interactions across steps may still exist, the same-step gradient conflict
inherent to joint optimization is structurally eliminated.
\end{proof}

\subsection{Theoretical Analysis: Global Degradation in Multi-Round Editing}
\label{app:tho2}
\begin{observation}[Global Degradation under Iterative End-to-End Editing]
\label{obs:global_degradation}
Consider an end-to-end image editor that applies a sequence of sub-instructions
$\{c_1, \dots, c_N\}$ through repeated generative updates
\begin{equation}
x^{(t+1)} = F_\theta(x^{(t)}, c_t).
\end{equation}
Even when each sub-instruction $c_t$ is intended to modify only a localized region,
the deviation of non-target regions increases with the number of editing rounds $N$.
\end{observation}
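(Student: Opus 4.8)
The plan is to formalize the intuition that each generative update $F_\theta$ is a \emph{global} map whose output depends on the entire input, so that even a locally-scoped instruction perturbs non-target pixels, and these perturbations compound multiplicatively across rounds. First I would decompose the image domain $\Omega$ into the target region $\Omega_t$ addressed by $c_t$ and its complement $\Omega_t^c$, and define the non-target deviation at round $t$ as $\delta^{(t)} = \big\| \Pi_{\Omega_t^c}\big(x^{(t+1)} - x^{(t)}\big)\big\|$, where $\Pi$ denotes restriction to the complement. The goal is to show $\mathbb{E}[\delta^{(t)}]$ stays bounded away from zero and that the cumulative deviation grows with $N$.

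The key structural facts to invoke are exactly the two mechanisms already identified in the ``Inevitability of Global Entanglement'' section: (i) the VAE encode--decode cycle $x \to z \to \hat{x}$ is lossy, so even the identity instruction yields $\|\Pi_{\Omega_t^c}(\hat{x} - x)\| > 0$, establishing a per-round floor $\delta^{(t)} \ge \varepsilon > 0$; and (ii) self-attention densely couples spatial tokens, so the Jacobian $\partial F_\theta / \partial x$ has nonzero off-diagonal blocks linking $\Omega_t$ to $\Omega_t^c$, meaning a genuine edit in $\Omega_t$ necessarily leaks a gradient-scaled perturbation into $\Omega_t^c$. I would then argue that because $F_\theta$ is not a contraction on $\Omega_t^c$ (it is a statistical resampler, not an identity-preserving operator there), the accumulated non-target error satisfies a recursion of the form $\Delta^{(t+1)} \ge L\,\Delta^{(t)} + \varepsilon$ for some effective propagation constant $L$ and per-step floor $\varepsilon$. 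Unrolling this recursion over $N$ rounds gives
\begin{equation}
\Delta^{(N)} \ge \varepsilon \sum_{t=0}^{N-1} L^{t},
\end{equation}
which is strictly increasing in $N$ and diverges when $L \ge 1$, formalizing the monotonic degradation observed empirically in Figure~\ref{fig:delta}.

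The main obstacle will be justifying the per-round lower bound $\delta^{(t)} \ge \varepsilon$ rigorously rather than heuristically: since $F_\theta$ is a learned, highly nonlinear map, one cannot in general rule out that the network has learned to perfectly preserve $\Omega_t^c$ on some inputs. I would sidestep a fully worst-case claim by framing the result as an \emph{expected}-deviation statement under the data distribution, leveraging the information-theoretic lossiness of the VAE bottleneck (a fixed-capacity latent $z$ cannot losslessly reconstruct arbitrary high-frequency content in $\Omega_t^c$) to guarantee a nonzero expected reconstruction floor. This keeps the argument at the level of an \emph{observation} rather than a strict theorem, which matches the paper's own framing of the statement; consequently I would present the final inequality as establishing the \emph{trend} $\mathbb{E}[\Delta^{(N)}]$ is non-decreasing and generically strictly increasing in $N$, rather than claiming a universal pointwise divergence.
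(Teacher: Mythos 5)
Your proposal is correct at the level the paper itself operates and takes essentially the same route: the paper's ``Explanation'' likewise posits a per-step reconstruction error $\epsilon^{(t)}$ from finite latent capacity and stochastic sampling, notes that dense token mixing spreads it into non-target regions, and accumulates these errors over $N$ rounds to claim $\mathbb{E}\|x^{(N)}-x^{(0)}\| \ge \sum_{t}\mathbb{E}\|\epsilon^{(t)}\|$. Your only substantive addition is the explicit restriction to $\Omega_t^c$ and the recursion $\Delta^{(t+1)} \ge L\,\Delta^{(t)} + \varepsilon$ with a stated non-contraction assumption, which is arguably a more careful way to justify monotone growth than the paper's direct sum, whose displayed lower bound actually points the wrong way under the triangle inequality and implicitly assumes the $\epsilon^{(t)}$ do not cancel.
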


\begin{proof}[Explanation]
Each update $F_\theta$ operates on the full image or latent representation.
Due to finite latent capacity and stochastic sampling,
the reconstruction at each step can be written as
\begin{equation}
x^{(t+1)} = x^{(t)} + \epsilon^{(t)},
\end{equation}
where $\epsilon^{(t)}$ denotes the reconstruction error affecting both target
and non-target regions.

Since updates are applied recursively, the accumulated deviation after $N$ rounds is
\begin{equation}
x^{(N)} - x^{(0)} = \sum_{t=1}^N \epsilon^{(t)} + \mathcal{O}(\epsilon^2),
\end{equation}
which implies
\begin{equation}
\mathbb{E}\|x^{(N)} - x^{(0)}\|
\ge \sum_{t=1}^N \mathbb{E}\|\epsilon^{(t)}\|.
\end{equation}

Moreover, modern generative editors rely on dense token mixing mechanisms,
causing local modifications to perturb global feature statistics.
As a result, errors introduced in early rounds propagate to non-target regions
and are amplified over subsequent rounds.

This observation explains the empirically observed background degradation
and increasing pixel-wise deviation outside the edited regions
as the number of editing rounds grows.
\end{proof}

\subsection{Details of Implementation}
\label{app:implementation}
\paragraph{Decomposer}
Our perception pipeline is implemented using Qwen-VL~\cite{qwen3} as the MLLM, Grounding DINO~\cite{liu2023grounding} for bounding box localization, and SAM~\cite{kirillov2023segment} for pixel-level segmentation. For layer completion, we employ Flux-Fill~\cite{labs2025flux1kontextflowmatching} to perform generative outpainting and content synthesis. Specifically, we use the MLLM to generate detailed descriptive prompts for each occluded instance; these prompts guide Flux-Fill to hallucinate the missing textures and structures behind occlusions, resulting in the final RGBA layers $\{\tilde{I}_i\}$. Simultaneously, to ensure a clean slate for downstream editing, we utilize OmniEraser~\cite{wei2025omnieraserremoveobjectseffects} to perform background inpainting. It effectively removes the foreground instances by filling the holes in the original image using surrounding context, thereby yielding a spatially coherent and artifact-free background $B$.

\paragraph{Physical Reasoning Prompt}
\label{app:physical}
We utilize a multimodal large language model (MLLM) to perform explicit physical reasoning over the image environment.
The model is instructed to simulate physical constraints such as gravity, support, and balance using a structured
``mind's eye'' reasoning process.
Below we provide the full prompt used for physical reasoning in our framework.
\begin{tcolorbox}[
    colback=gray!5,
    colframe=gray!50,
    boxrule=0.5pt,
    left=6pt,
    right=6pt,
    top=6pt,
    bottom=6pt
]
\ttfamily
\small
\begin{spacing}{1.05}

You are a physics reasoning expert. Analyze this scene and the user instruction.

Scene Description:
\{scene\_desc\}

User Instruction: "\{text\_prompt\}"

=== Physical World Simulation Rules (Mind's Eye) ===

Apply the following explicit physics rules:

**Gravity Rules:**
- Rule 1: If Support Object X is removed, Supported Object Y MUST fall
- Rule 2: All objects are affected by gravity unless supported

**Support Rules:**
- Rule 3: Object Y is supported by X if and only if Y is above X and in contact
- Rule 4: If X supports Y, removing X causes Y to lose support

**Balance Rules:**
- Rule 5: An object remains balanced if its center of mass is within the support base
- Rule 6: If support shifts causing center of mass to move outside base, object loses balance

=== Reasoning Steps ===

1. Identify Target Objects
2. Current State
3. Action Effects
4. Physical Constraints
5. Secondary Effects
6. Final State

=== Output Format ===

Provide structured JSON:
\{
    "reasoning": "...",
    "target\_objects": [...],
    "constraints": [...],
    "affected\_objects": [...],
    "predicted\_outcome": "..."
\}

\end{spacing}
\end{tcolorbox}

\paragraph{Atomic Actions planner prompt}
\label{app:planner}
We utilize a multimodal large language model (MLLM) to perform explicit chain-of-thought (CoT) reasoning over the image environment.
The MLLM acts as an action planner that translates the reasoning results into a sequence of executable atomic actions,
which are subsequently applied to the scene to achieve the desired editing goal.
Below we provide the full prompt used for the action planner in our framework.

\begin{tcolorbox}[
    colback=gray!5,
    colframe=gray!50,
    boxrule=0.5pt,
    left=6pt,
    right=6pt,
    top=6pt,
    bottom=6pt
]
\ttfamily
\small
\begin{spacing}{1.05}

You are an action planning expert. Based on the physical reasoning, generate atomic actions.

Scene Description:
\{scene\_desc\}

User Instruction: "\{text\_prompt\}"

Physical Reasoning Result:
\{reasoning\_result\}

Generate a sequence of atomic actions to achieve the instruction while respecting physical constraints.

=== Image Coordinate System ===

Image Dimensions: \{width\} $\times$ \{height\} pixels

Reference Points:
- Top-Left: (0, 0)
- Top-Right: (\{width\}, 0)
- Bottom-Left: (0, \{height\})
- Bottom-Right: (\{width\}, \{height\})
- Image Center: (\{width/2\}, \{height/2\})

Coordinate Rules:
- Origin at top-left
- X increases left to right
- Y increases top to bottom
- All coordinates must satisfy image bounds

=== Available Atomic Actions ===

REMOVE(object\_id) \\
MOVE(object\_id, x, y) \\
KEEP(object\_id) \\
FALL(object\_id, delta\_y) \\
RESIZE(object\_id, scale) \\
RETOUCH(object\_id, brightness, contrast, color, sharpness) \\
EDIT(object\_id, prompt, edit\_type) \\
INSERT(prompt, x, y, width, height, layer\_relation)

Guidelines:
1. Respect gravity and support relations
2. Maintain physical plausibility
3. Execute actions in correct order
4. Strictly adhere to image boundaries

=== Output Format ===

\{
  "action\_sequence": [
    \{ "action": "...", "object\_id": ..., "reason": "..." \}
  ]
\}

Generate the action sequence.

\end{spacing}
\end{tcolorbox}

\subsection{Details of Evaluation Metrics}
\label{app:eval}

\paragraph{Comprehensive Metrics.}
Beyond fundamental metrics like redesigned LPIPS-U and DINO-ViT for fidelity and perceptual quality, we specifically design SA and CSR to quantify constraint-following capabilities. To further capture the nuances of instruction completion, we incorporate MLLM-based metrics, including PC, IC, and MS. The specifics of these metrics are detailed in the subsequent discussion.

\paragraph{LPIPS-U (Unedited-Region LPIPS).}
A key challenge in image editing evaluation is disentangling edit correctness from background preservation.
To emphasize preservation, we define \textbf{LPIPS-U} that measures perceptual distance \emph{only on unedited regions}.

Let $I$ be the original image and $\hat{I}$ be the edited image.
We first estimate a binary edit mask $M \in \{0,1\}^{H \times W}$, where $M(p)=1$ indicates edited pixels and $M(p)=0$ indicates unedited pixels; $\bar{M}=1-M$ denotes the unedited mask.
Let $\phi_\ell(\cdot)$ be the feature map at layer $\ell$ of a fixed perceptual backbone (as in LPIPS), and $\odot$ denote element-wise masking (broadcast to channels).
LPIPS-U is defined as
\begin{equation}
\begin{aligned}
\mathrm{LPIPS\text{-}U}(I,\hat{I}) = & \sum_{\ell \in \mathcal{L}} w_\ell \cdot \\
& \left\| \left(\phi_\ell(I) - \phi_\ell(\hat{I})\right) \odot \bar{M}_\ell \right\|_2,
\end{aligned}
\end{equation}
where $\bar{M}_\ell$ is the unedited mask resized to match the spatial resolution of $\phi_\ell(\cdot)$, and $\{w_\ell\}$ are layer weights.

Unlike LPIPS, LPIPS-U suppresses the contribution from edited regions, making it more sensitive to \emph{unintended changes} outside the target area (i.e., background/irrelevant-region degradation), which is crucial for instruction-based editing.

\paragraph{Constraint following Metrics (SA and CSR).}
We quantify spatial adherence using a GroundingDINO-based detector by comparing object presence and location between $I$ and $\hat{I}$.

For each sample $i$, suppose the prompt yields a set of spatial constraints $\mathcal{C}_i$.
Each constraint $c \in \mathcal{C}_i$ specifies a target object and an expected spatial relation (e.g., left/right/top/bottom/center) and implies an operation type (e.g., \textsc{Remove}, \textsc{Move}, \textsc{Insert}).
Let $a_{i,c} \in [0,1]$ be the per-constraint accuracy computed from detections (object existence and/or normalized center coordinates).

We define \textbf{Spatial Accuracy (SA)} as the mean accuracy across all evaluated constraints:
\begin{equation}
\mathrm{SA}
=
\frac{1}{\sum_i |\mathcal{C}_i|}
\sum_i \sum_{c \in \mathcal{C}_i} a_{i,c}.
\end{equation}

We further define \textbf{Constraint Satisfaction Rate (CSR)} as the fraction of constraints that are satisfied under a threshold $\tau$ (e.g., $\tau=0.7$):
\begin{equation}
\mathrm{CSR}
=
\frac{1}{\sum_i |\mathcal{C}_i|}
\sum_i \sum_{c \in \mathcal{C}_i}
\mathbb{I}\!\left[a_{i,c} \ge \tau\right],
\end{equation}
where $\mathbb{I}[\cdot]$ is the indicator function.
Intuitively, SA reflects average spatial precision, while CSR measures strict compliance frequency.

\paragraph{Instruction-completion evaluation Metrics (PC, IC, and MS).}
We utilize a multimodal learning model (Qwen3-VL) to generate structured assessments that better align with human perception and instruction execution. Scores range from 1 to 10, ultimately mapping to a 0-1 scale.

\textbf{Physical Consistency (PC).}
The judge enumerates physical flaws introduced by the edit (e.g., inconsistent shadows/lighting, implausible occlusion, perspective violations, visible artifacts) with severities.
We compute a deduction-style score:
\begin{equation}
\mathrm{PCPC}
=
\mathrm{clip}_{[1,10]}\!\left(
10 - \sum_{k \in \mathcal{E}_{\text{phys}}} d_k
\right),
\end{equation}
where $\mathcal{E}_{\text{phys}}$ is the set of detected physical issues and $d_k$ is the penalty associated with issue $k$ (larger for more severe flaws).

\textbf{Instruction Following (IC).}
The judge decomposes the prompt into a set of atomic, visually verifiable requested edits $\mathcal{R}$, and assigns each request $r\in\mathcal{R}$ a fulfillment score $f_r \in [0,1]$ (fulfilled/partial/not fulfilled).
Additionally, it reports penalties for unrequested changes and non-target-region damage.
We compute:
\begin{equation}
\begin{aligned}
\mathrm{IC} = \mathrm{clip}_{[1,10]} \Biggl( & 10 \cdot \frac{1}{|\mathcal{R}|} \sum_{r \in \mathcal{R}} f_r \\
& - \sum_{u \in \mathcal{E}_{\text{unwanted}}} d_u - \sum_{p \in \mathcal{E}_{\text{preserve}}} d_p \Biggr).
\end{aligned}
\end{equation}
This formulation emphasizes both completing requested edits and avoiding collateral changes.

\textbf{Multi-step Score (MS).}
While IC focuses on localized fidelity, the Multi-step Score (MS) evaluates the model's ability to execute complex, sequential instructions. For multi-step instructions, the judge first lists the expected action steps $\mathcal{S}$ (restricted to explicit edit actions such as \textsc{Remove}, \textsc{Move}, \textsc{Edit}, \textsc{Insert}, \textsc{Resize}), then scores each step $s\in\mathcal{S}$ with success $g_s \in [0,1]$ based on comparing $(I,\hat{I})$.
We define:
\begin{equation}
\mathrm{MS}
=
10 \cdot \frac{1}{|\mathcal{S}|}\sum_{s \in \mathcal{S}} g_s.
\end{equation}
This metric directly measures step-wise executability and is conservative when steps are missing or not clearly satisfied.

\subsection{Details of Evaluation}
Evaluation is conducted on three benchmarks: I2E-Bench, MagicBrush, and EmuEdit. For each benchmark, we randomly sample $100$ image--instruction pairs from the official evaluation sets. All quantitative metrics are computed using the same evaluation protocol and fixed random seeds to ensure consistency and reproducibility. No additional data filtering or manual selection is performed. The same sampled instances are used consistently across all compared methods and evaluation protocols.

\subsection{Details of VLA Editing }
We provides more visualization of our VLA process in Figure~\ref{fig:app_vla}.

\begin{figure*}[t]
  \centering
  \includegraphics[width=\textwidth]{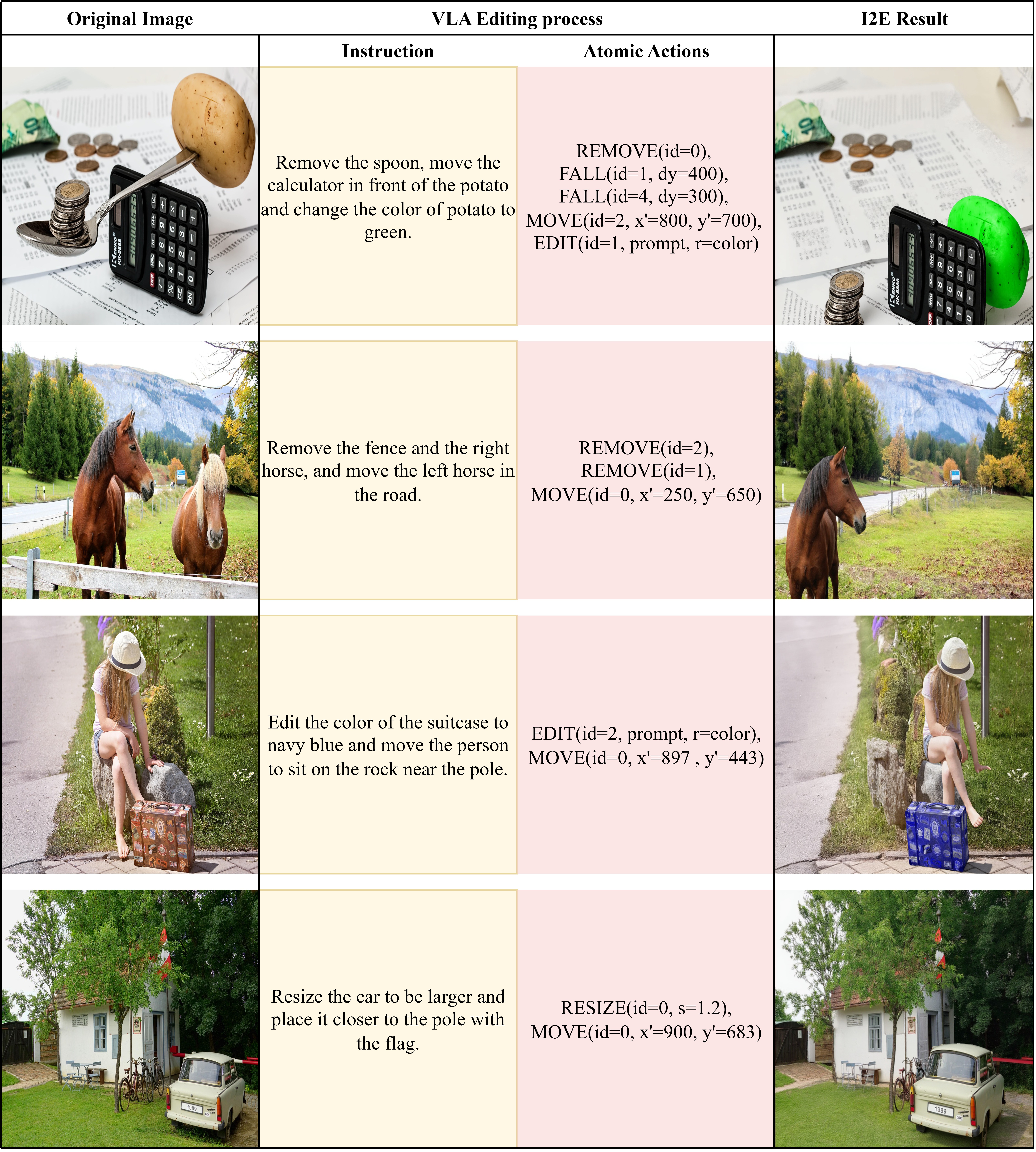}
  \caption{Visualization of VLA process on our proposed I2E-Bench.}
  \label{fig:app_vla}
\end{figure*}

\subsection{Details of Human Evaluation}
Additional visualizations of the human evaluation process are provided in Figure~\ref{fig:he2} and Figure~\ref{fig:he3}. All evaluations were conducted by unpaid volunteers. The data were collected in a blind manner, where participants were not informed of the methods associated with the evaluated results. The evaluation interface and instructions were presented in English. No personal identifying information was collected, and no assumptions or guarantees are made regarding the demographic attributes (e.g., region or gender) of the participants.

\begin{figure*}[t]
  \centering
  \includegraphics[width=\textwidth]{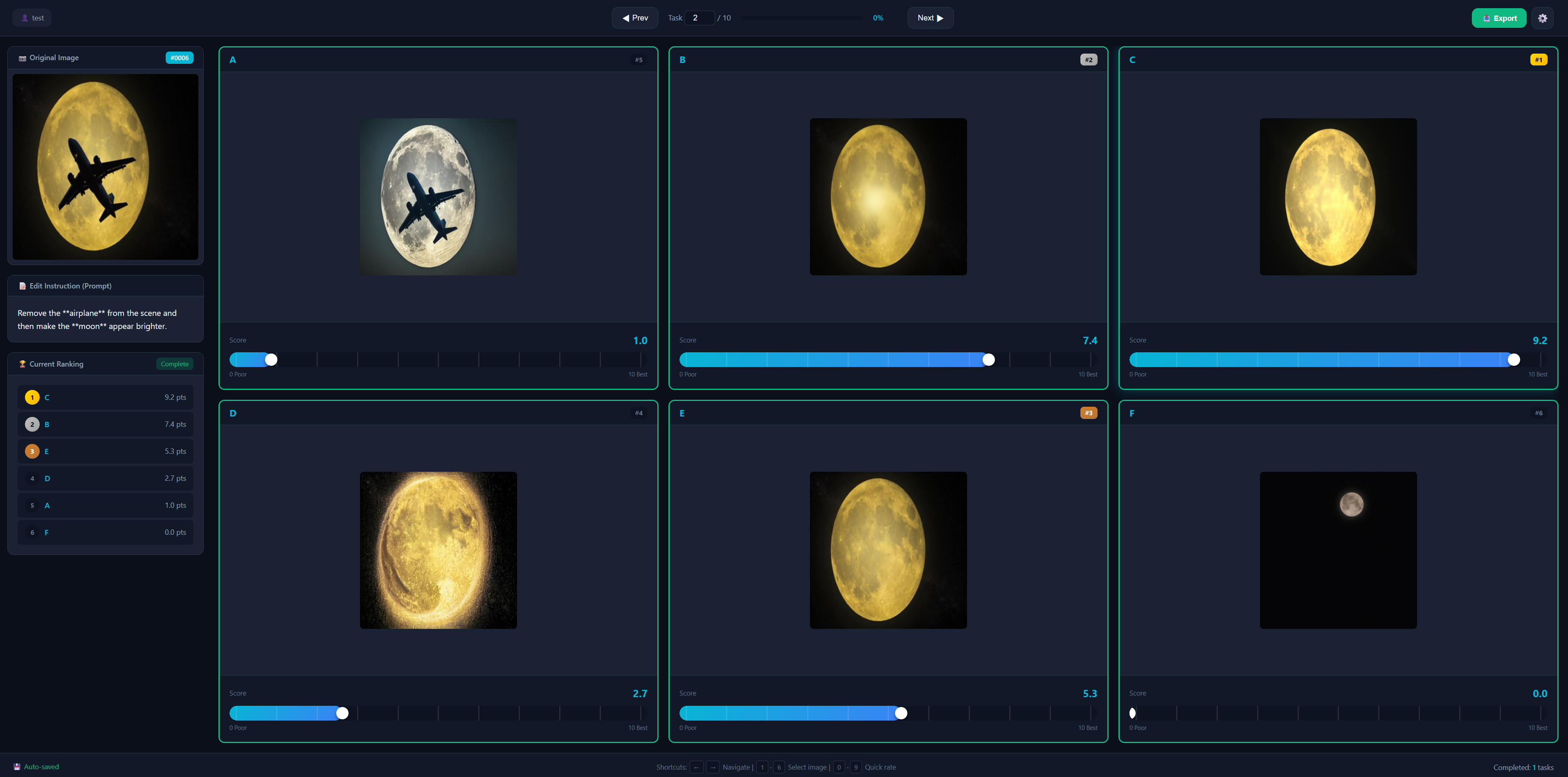}
  \caption{Example of human evaluation system.}
  \label{fig:he2}
\end{figure*}
\begin{figure*}[t]
  \centering
  \includegraphics[width=\textwidth]{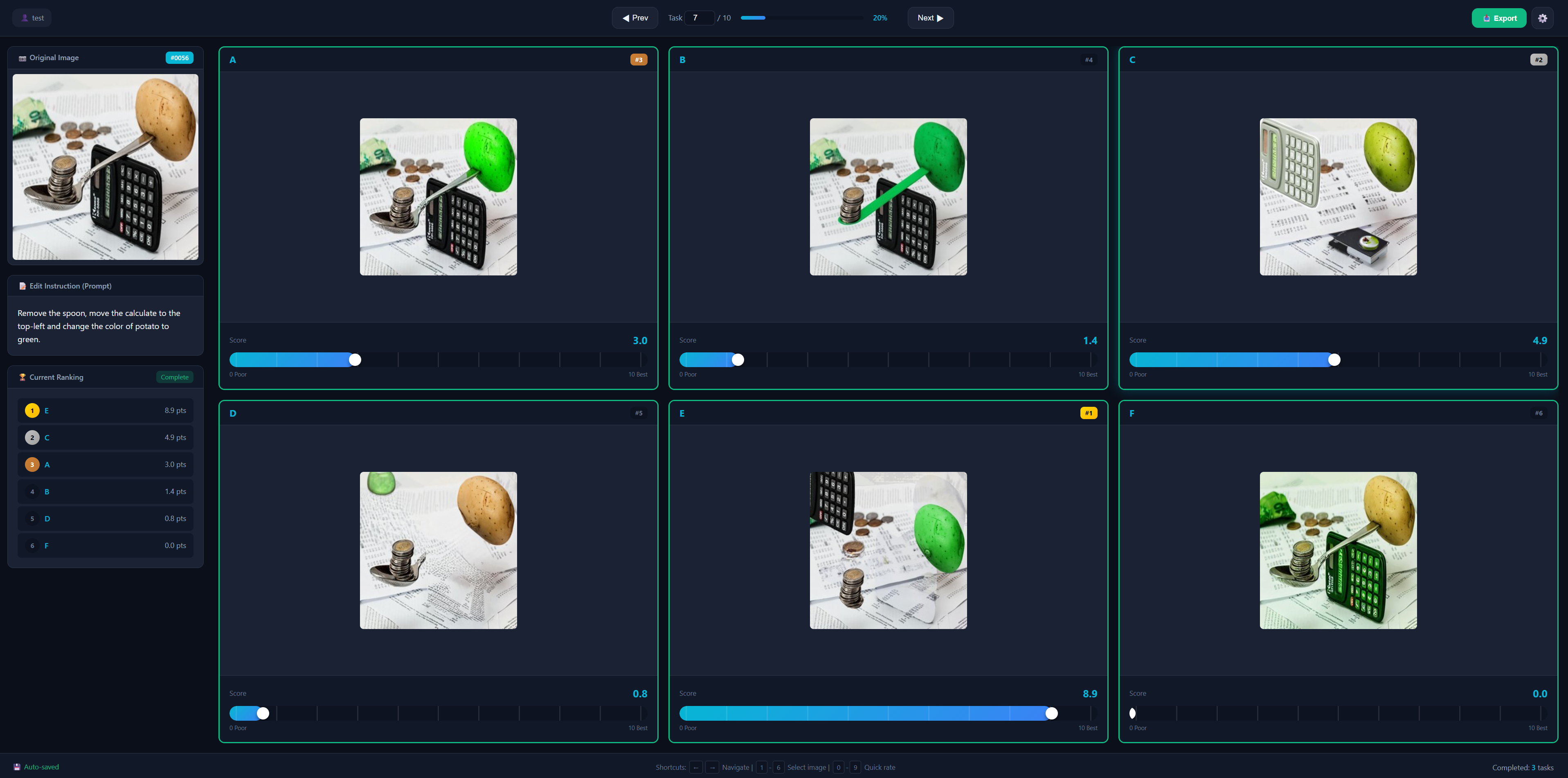}
  \caption{Example of human evaluation system.}
  \label{fig:he3}
\end{figure*}

\subsection{More qualitative Results}
\label{app:more_qualitative_results}
In this section, we present additional qualitative results to further demonstrate the effectiveness of our method across various scenarios in the I2E-Bench. Figures~\ref{fig:app_qua1}, \ref{fig:app_qua2}, and \ref{fig:app_qua3} provide extensive cases demonstrating that our I2E framework consistently outperforms representative baselines in executing multi-step composite instructions, maintaining physical consistency, and ensuring precise spatial localization.

\begin{figure*}[t]
  \centering
  \includegraphics[width=\textwidth]{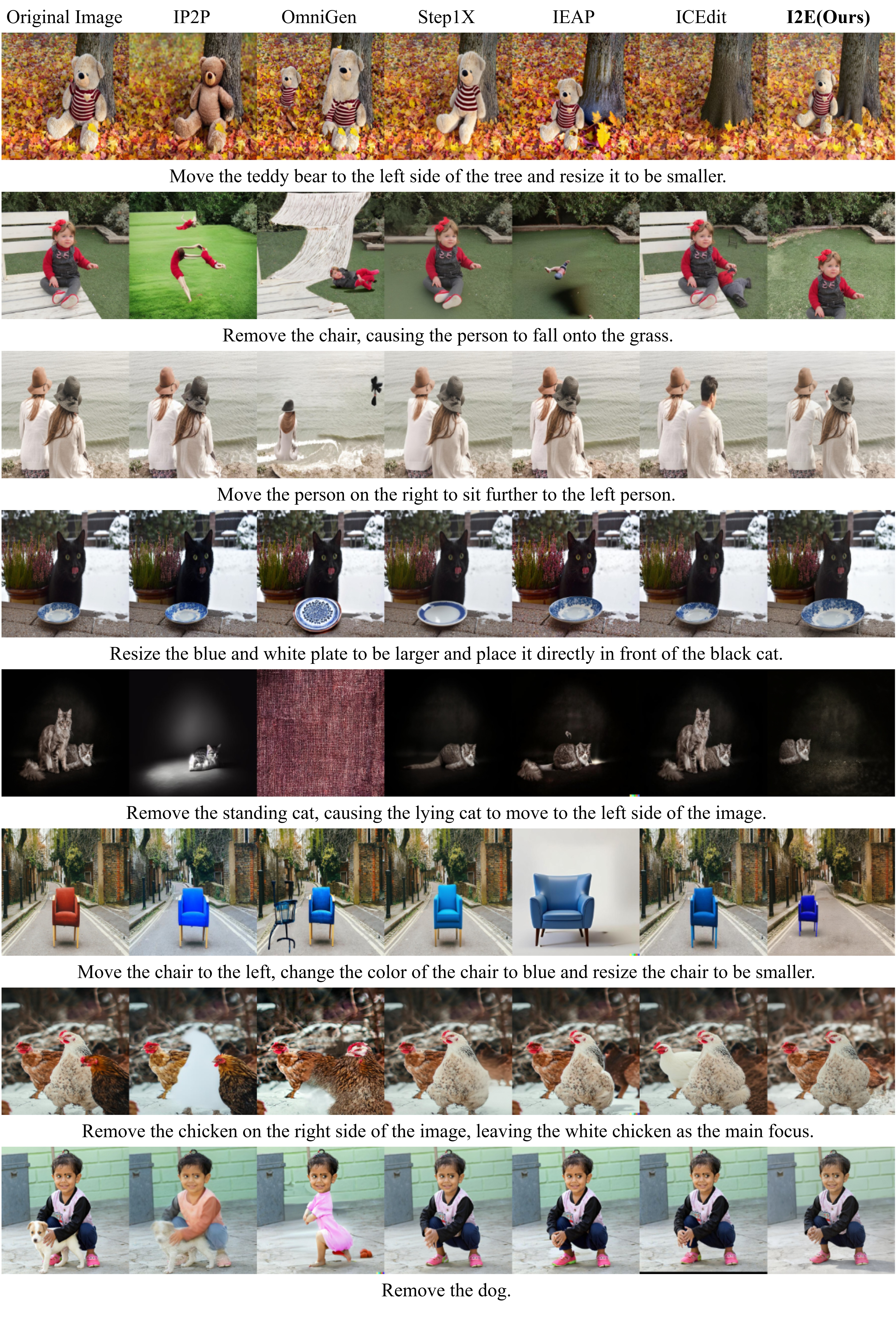}
  \caption{More qualitative results on our proposed I2E-Bench.}
  \label{fig:app_qua1}
\end{figure*}
\begin{figure*}[t]
  \centering
  \includegraphics[width=\textwidth]{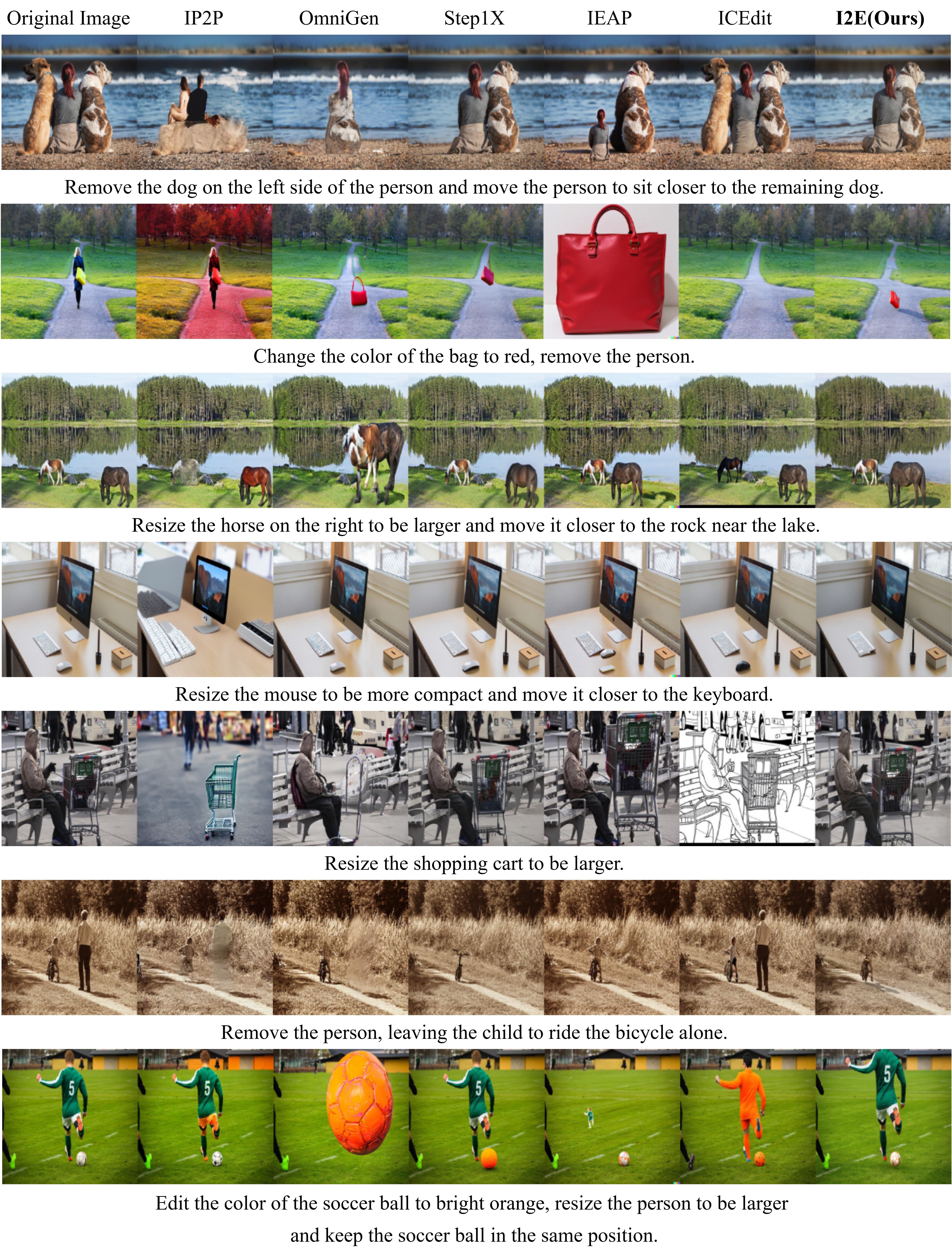}
  \caption{More qualitative results on our proposed I2E-Bench.}
  \label{fig:app_qua2}
\end{figure*}
\begin{figure*}[t]
  \centering
  \includegraphics[width=\textwidth]{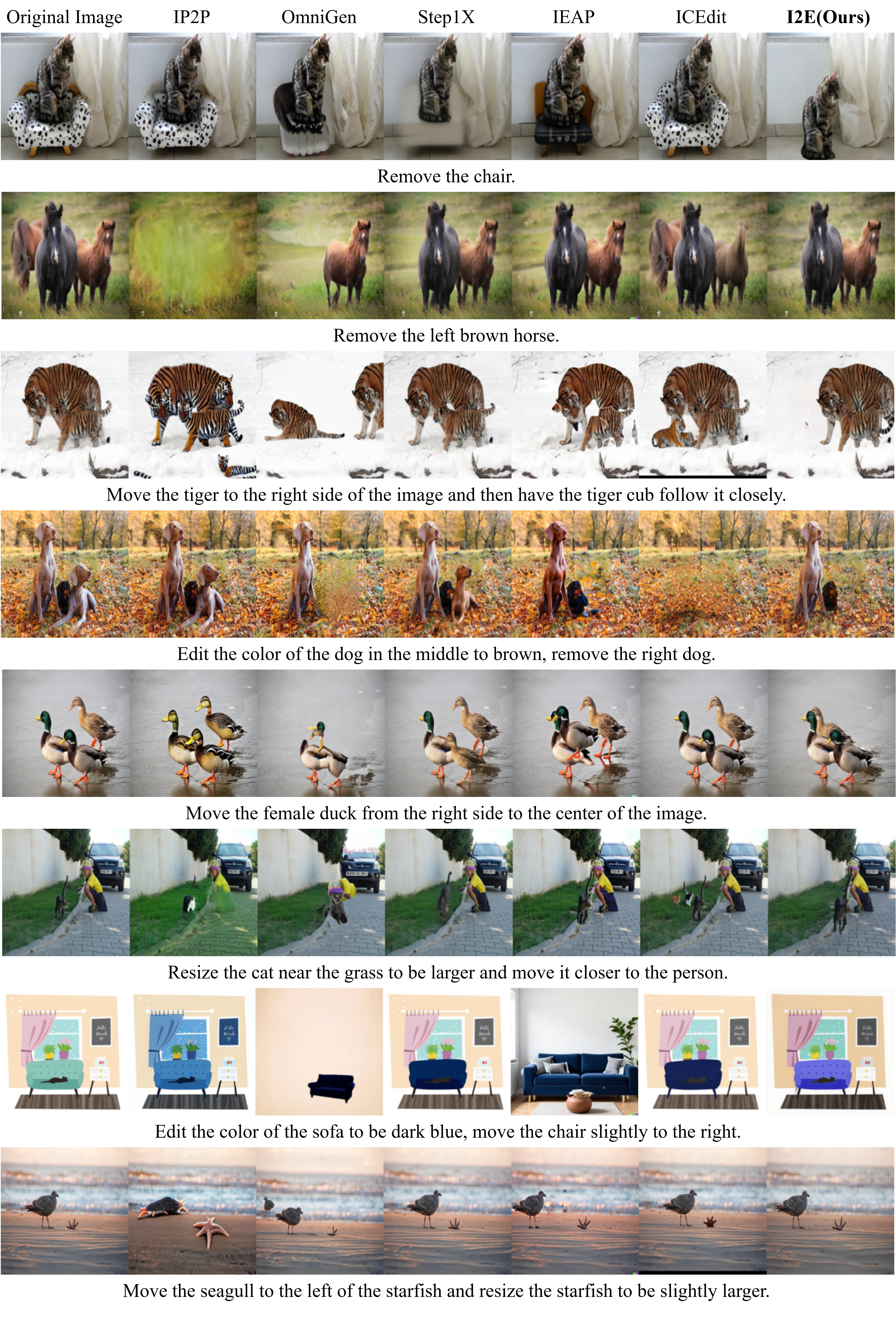}
  \caption{More qualitative results on our proposed I2E-Bench.}
  \label{fig:app_qua3}
\end{figure*}


\subsection{Clarification of Concerns}

\paragraph{Data Contains Personally Identifying Info Or Offensive Content}
All datasets used in this work are collected from publicly available, open-source websites and benchmarks that are explicitly released for non-commercial academic research. We do not intentionally collect or curate any data containing personally identifying information (PII). The data sources are commonly used in prior research and do not target specific individuals. No additional annotation or manual data collection involving human subjects was conducted in this work. As such, the risk of containing sensitive or identifying personal information is minimal.

\paragraph{The Use of Large Language Models (LLMs)}
Large language models (LLMs) were used solely for auxiliary purposes, including grammar checking, formatting refinement, and translation assistance during manuscript preparation. In addition, minor illustrative elements in Figure~1 were generated with the assistance of an LLM-based image generation tool for visualization purposes only (e.g., depicting a human figure). These generated elements are purely illustrative and do not contribute to the core methodology, experimental results, or scientific claims of this work. The overall conceptual design and technical content are entirely human-authored. The use of LLMs does not affect the validity or reproducibility of the experimental findings.

\paragraph{Discuss The License For Artifacts}
All external artifacts used in this work, including datasets and pretrained models, are obtained from open-source research projects and publicly accessible repositories. These artifacts are used in accordance with their original licenses and terms of use, which permit non-commercial academic research. No proprietary or restricted data or models are used.

\paragraph{Artifact Use Consistent With Intended Use}
The usage of existing artifacts in this work is consistent with their intended research purposes as specified by their original authors. All experiments are conducted strictly within a non-commercial, academic research context. Any derived artifacts or outputs produced by our pipeline are intended solely for research and evaluation purposes and are not deployed in real-world or commercial applications.

\paragraph{Documentation Of Artifacts}
The datasets and artifacts used in this work are standard benchmarks in the research community and are documented in their original releases, including information about domains and language usage. Our work focuses on instruction-based multimodal inputs, with all textual instructions provided in English. No claims are made regarding demographic representativeness beyond what is specified in the original datasets.

\paragraph{Model Size And Budget}
This work adopts a training-free pipeline and does not involve training or fine-tuning large-scale models. Therefore, there is no additional model parameter reporting or large-scale computational budget associated with model training. The experiments are conducted using standard academic computing resources for inference and evaluation.

\paragraph{Potential Risks}
This work is based entirely on publicly available, open-source models, data and datasets that are widely used in prior academic research.

Potential risks are limited to the general misuse of image editing technologies, such as generating misleading or manipulated visual content. However, these risks are not unique to the proposed method and are shared by existing generative and editing models. The intended use of this work is strictly non-commercial academic research, and no deployment in real-world decision-making systems is considered. We do not anticipate any significant ethical, societal, or safety concerns arising from the use of this work under its intended scope.

\end{document}